\documentclass[10pt,reqno]{amsart}
\usepackage{companypaper}
\usepackage{algorithm}
\usepackage{algorithmic}
\usepackage{natbib}
\usepackage{hyphenat}
\usepackage{enumitem}
\usepackage{verbatim}

\makeatletter
\pretocmd{\@settitle}{\let\uppercasenonmath\@gobble}{}{}
\patchcmd{\@settitle}{\bfseries}{\bfseries\LARGE}{}{}
\pretocmd{\@setauthors}{\let\MakeUppercase\@firstofone}{}{}
\patchcmd{\@setauthors}{\centering\footnotesize}{\centering\large}{}{}
\apptocmd{\@setauthors}{}{}{}
\renewcommand{\@setaddresses}{}
\makeatother

\usepackage{pifont}
\usepackage{booktabs}
\usepackage{multirow}
\usepackage{threeparttable}
\usepackage{array}
\usepackage{wrapfig}
\usepackage{tabularx}
\newtheorem{Corollary}{Corollary}

\usepackage{fontawesome5}

\newtcolorbox{codebox}[1][]{
  sharp corners,
  colback=PyCodeBg,
  colframe=PyCodeBg,
  boxrule=0pt,
  left=0mm, right=0mm, top=1mm, bottom=1mm,
  #1
}

\usepackage{newfloat}
\DeclareFloatingEnvironment[
  fileext=loc,
  listname={List of Codes},
  name=Code,
  placement=t,
]{codefloat}

\usepackage{nicefrac}
\usepackage{physics2}
\usepackage{fixdif, derivative}
\usephysicsmodule{ab}

\CompanyLogo{company_logo.png}
\CompanyLogoHeight{11mm}
\CompanyHeaderText{Fujitsu Limited}
\CompanyDocType{}
\CompanyClassification{}
\CompanySubtitle{}

\title{HoT-SSM:Higher-order Temporal Knowledge Graph Reasoning with State Space Models for Health Care}

\author{%
  Thummaluru Siddartha Reddy$^{*}$ \quad
  Vempalli Naga Sai Saketh$^{*}$ \quad
  Yash Punjabi$^{*}$ \\[4pt]
  Mahesh Chandran \\[4pt]
  Fujitsu Research of India, Bangalore \\[2pt]
  \texttt{\{Thummaluru.Siddarthareddy, nagasaisaketh.vempalli\}@fujitsu.com} \\
  \texttt{\{yash.punjabi, mahesh.chandran\}@fujitsu.com} \\
}

\begin{document}

\maketitle
\renewcommand{\thefootnote}{\fnsymbol{footnote}}
\footnotetext{$^*$Equal Contribution.}
\begin{center}
\small

\end{center}

\begin{abstract}
  Medical knowledge graphs (MKGs) infused with clinical knowledge have been increasingly used to model electronic health records (EHRs) to support interpretable predictions in healthcare domain. However, existing MKG-based approaches are limited in capturing pairwise relations between clinical concepts (e.g., conditions, procedures, and medications), and restricts their ability to model higher-order interactions among co-occurring or semantically related concepts. In addition, most representation learning methods that leverage MKGs either collapse temporal information across visits or lack an explicit mechanism for modeling long-range temporal dependencies, which is critical for  clinical tasks such as mortality prediction. To mitigate these limitations, we propose \texttt{HoT-SSM}, a parameter efficient and higher-order temporal graph reasoning with state space models. For each visit, \texttt{HoT-SSM} constructs hypergraphs by grouping semantically related clinical concepts into hyperedges using domain knowledge, thereby preserving visit-level clinical context. Further, to model the temporal dynamics while learning the representations, we introduce a novel dynamic hypergraph-based state space model that explicitly captures patients latent state evolution over time while preserving long-range information. The learned representations are used for downstream clinical prediction and reasoning. Experiments on MIMIC-III and MIMIC-IV datasets  shows significant performance improvement over the current state-of-the-art models, demonstrating the effectiveness of jointly modeling higher-order clinical interactions and long-range temporal dependencies.
\end{abstract}

\section{Introduction}
\label{submission}

Knowledge graphs (KGs) offer a structured and relational representation of domain-specific knowledge, encoding the semantic relationships between the entities \citep{hogan2021knowledge,gao2025leveraging}. Specifically, KGs augmented with large language models (LLMs) as an inductive bias are increasingly becoming popular in enhancing the reasoning and generalization capabilities of LLMs, particularly in domains where factual consistency and interpretability are critical \citep{wu2024medical,wu2025medical,jiang2023think}. In the medical domain, medical knowledge graphs (MKGs) encode clinical ontologies that formalize medical concepts-such as diagnoses, procedures, and medications-and their clinical relationships \citep{aldughayfiq2023capturing, shirai2021applying}. Integrating MKGs with electronic health record (EHR) data has demonstrated improved performance and interpretability on downstream clinical tasks such as mortality prediction, drug recommendation,  to name a few \citep{KARE,Graphcare,gao2025leveraging,jiang2023think}. 

Applications of artificial intelligence in healthcare have evolved from traditional machine learning based models for clinical decision support \citep{choi2016retain, choi2016doctor, choi2017gram, zhang2021grasp} to more recent approaches that employ large language models (LLMs) to perform complex clinical reasoning \citep{jiang2023think, wu2024medical, wu2025medical}. Existing methods for modeling EHR data can be broadly classified into two categories. The first category focuses on representation learning, where deep neural networks or graph neural networks are used to learn latent patient-level or concept-level embeddings for downstream prediction tasks \citep{ma2018kame, ma2020concare, zhang2021grasp, Graphcare}. The second category are LLM driven approaches, in which LLM's pretrained on general and biomedical corpus are integrated with medical knowledge graphs (MKGs). In these methods, structured medical concepts and relations from MKGs are used to ground and constrain the reasoning process of LLMs, enabling more interpretable and clinically meaningful inference \citep{KARE, Graphcare, gao2025leveraging}. While representation-based models excel at capturing global contextual patterns in EHR data, LLM-based approaches offer enhanced interpretability through explicit reasoning. These complementary strengths have motivated recent efforts to integrate structured knowledge representations with neural models for clinical prediction.

Approaches such as \texttt{KARE} \citep{KARE} and \texttt{GraphCare} \citep{Graphcare} leverage domain knowledge to build personalized KGs and subsequently perform inference using either LLM-based reasoning or graph neural networks. While effective in incorporating external medical knowledge, these methods exhibit important limitations. In particular, both \texttt{KARE} and \texttt{GraphCare} restrict interactions between medical concepts to pairwise relations, which fundamentally limits their ability to model higher-order clinical patterns. In real-world EHR data, clinical outcomes are often driven by the joint co-occurrence of multiple related concepts within a visit. For example, severe respiratory failure is characterized by the simultaneous presence of acute respiratory distress syndrome, hypoxemia, and mechanical ventilation; modeling such concepts using pairwise relations fails to capture their collective severity which is critical for tasks such as mortality prediction. Furthermore, existing MKG-based approaches either collapse temporal information across visits or lack an explicit mechanism for modeling latent temporal state evolution, thereby limiting their ability to capture long-range disease progression and patient history. In addition, they  incur significant computational costs to obtain the predictions (see Fig.~\ref{fig:pred_size}).  These challenges motivate the need for a unified parameter efficient framework that can simultaneously represent higher-order clinical interactions and explicitly model temporal dynamics  while preserving the information for long range.

\begin{wrapfigure}{r}{0.42\textwidth}  
  \centering
\includegraphics[width=0.40\textwidth]{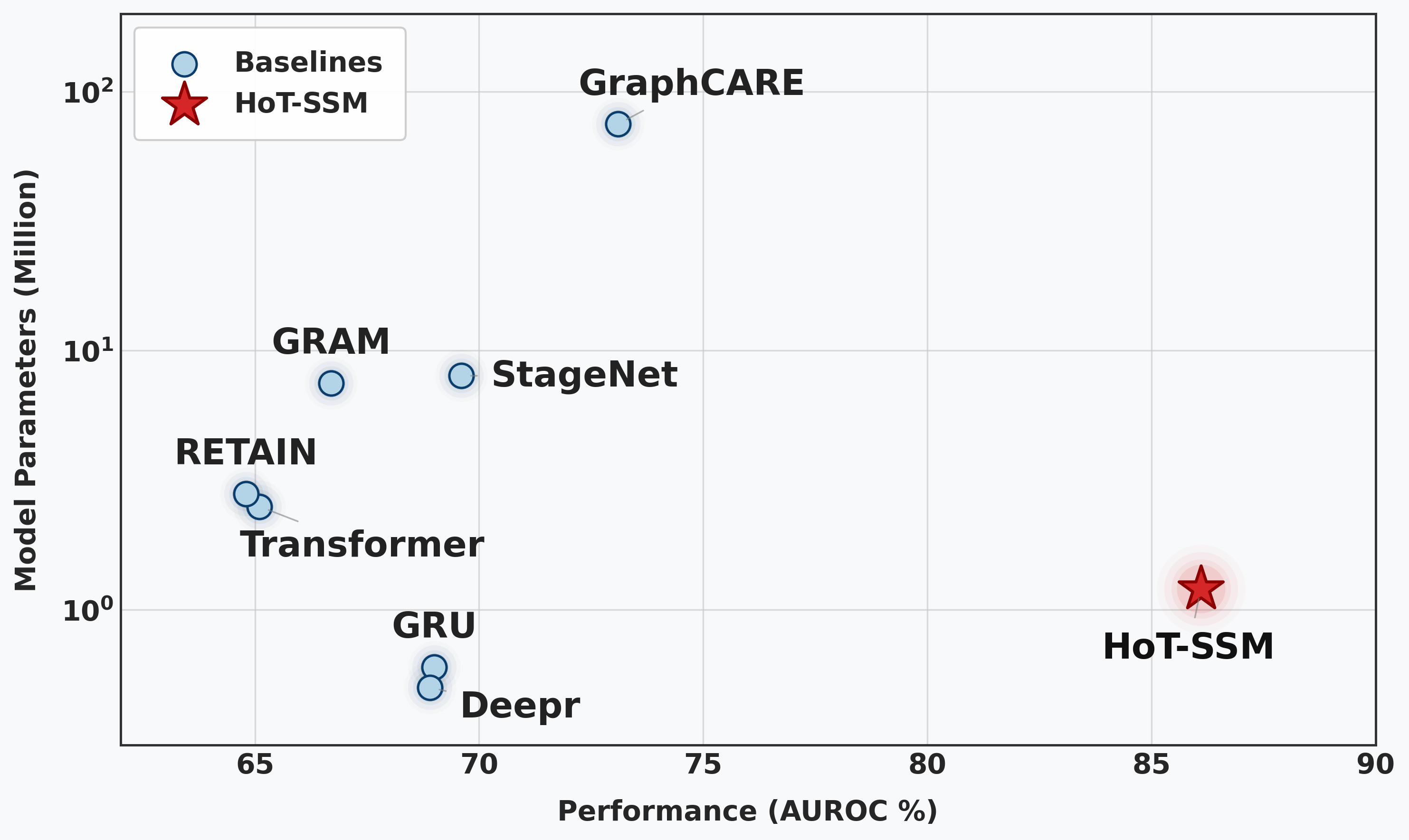}
  \caption{Performance vs model size on MIMIC-IV (mortality).}
  
  \label{fig:pred_size}
\end{wrapfigure}

In this work, we introduce \texttt{HoT-SSM}, a new parameter-efficient framework for EHR modeling that preserves long-range temporal information across clinical visits while capturing higher-order relationships among medical concepts through knowledge infused hypergraphs.
Specifically, \texttt{HoT-SSM} represents patient-specific EHR data as a temporal sequence of knowledge infused hypergraphs pertaining to each clinical visit; the knowledge infusion is achieved by connecting semantically related conditions, medications, and procedures by hyperedges, extracted from a global hyperknowledge graph constructed over the dataset. Resulting temporal hypergraphs are processed by the proposed dynamic state space model (SSM) to explicitly learn the evolving latent state of the patient which simultaneously encodes higher-order relational structure (due to hypergraphs) and  preserves long-range temporal information (due to SSM). Empirical results on MIMIC-III and MIMIC-IV across multiple clinical tasks shows markedly improved performance, demonstrating  its the effectiveness.
We summarize the main contributions as follows:
\begin{itemize}
\item A Temporal Knowledge infused hypergraphs is introduced to model patient-specific EHR data, explicitly capturing higher-order clinical relationships and their temporal evolution across visits.
\item A dynamic hypergraph state space model that jointly preserves higher-order spatial interactions and long-range temporal information is proposed to model the temporal evolution of the patient state. Furthermore, we establish theoretical guarantees for the robustness of the representations from \texttt{HoT-SSM} to perturbations and permutation of graph structure.
\item A temporal reasoning mechanism is proposed to extract structured reasoning paths from the learned latent state representations, which are subsequently verbalized using a lightweight language model to produce human-interpretable explanations.
\end{itemize}

\section{Preliminaries}
In this section, we setup the mathematical background required for the proposed framework.
\subsection{Knowledge Graphs and Temporal Knowledge Graphs}
Let $\mathcal{V}$ denote a finite set of entities (subjects and objects), and $\mathcal{R}$ denote a set of relations. A \emph{KG} encodes the structured relationships and is represented  as
$\mathcal{G} = (\mathcal{V}, \mathcal{R}, \mathcal{E})$, where $\mathcal{E} \subseteq \mathcal{V} \times \mathcal{R} \times \mathcal{V}$ is a set of relational triples. In particular, each triple
$
e = (s, r, o) \in \mathcal{E}
$
represents a relationship of type $ r \in \mathcal{R}$ between a subject entity $s \in \mathcal{V}$ and an object entity $o \in \mathcal{V}$.

\emph{Temporal knowledge graphs} (TKGs) are a sequence of time-indexed KGs
$
\mathcal{T_G} = \{ \mathcal{G}_1, \mathcal{G}_2, \ldots, \mathcal{G}_T\}
$ with  each $\mathcal{G}_{t} \in \mathcal{T_{G}}$ and relations are timestamped. In particular, each temporal triple
$
e_t = (s, r, o) \in \mathcal{E}_t
$
encodes the relationship between the entities at time $t$.


\subsection{Hypergraphs}
Hypergraphs are higher order abstractions of graphs where the hyperedges encodes relationship between multiple entities $\geq2$ \citep{yadati2019hypergcn,feng2019hypergraph}. Formally, a hypergraph is defined as $ \mathcal{H} = (\mathcal{V}, \mathcal{E})$,
where  $\mathcal{E}$ denotes the set of hyperedges, with each hyperedge $ e_{h} \in \mathcal{E} $ connects a subset of entities such that  $|e_{h}| \geq 2$. The relationship between entities and hyperedges is encoded by the incidence matrix
$
\mathbf{I} \in \mathbb{R}^{|\mathcal{V}| \times |\mathcal{E}|},
$
and has non-zero entries if an entity participates in a hyperedge.  
\subsection{Temporal Graph State Space Models} 
State Space Models (SSMs) are computationally efficient temporal representation learning architectures that are well suited for modeling long-range temporal dependencies \citep{gu2024mamba}.  Temporal Graph State Space Models \texttt{(T-SSM)} extends classical SSMs to dynamic graph-structured data \citep{li2024state}. At each time step $k$, \texttt{T-SSM} updates the latent node representations $\mathbf{S}_k$ as
\begin{equation}
\begin{aligned}
\mathbf{S}_{k} &= \mathbf{S}_{k-1} \exp(\Delta_{k}\mathbf{A}) + 
\tilde{\mathbf{X}}_{k} \big(\exp(\Delta_{k}\mathbf{A}) - \mathbf{I}\big)\mathbf{A}^{-1},
\label{eq:ssm}
\nonumber
  \end{aligned}
  \end{equation}
Here $\tilde{\mathbf{X}}_{k}$  encodes the spatial structure of the graph and is obtained  by $ \tilde{\mathbf{X}}_{k} = \mathrm{GNN}_{\boldsymbol{\Theta}}(\mathbf{L}_{k}, \mathbf{X}_{k}) \mathbf{B}^{T}
$. Whereas $\mathbf{L}_{k}$ and $\mathbf{X}_{k}$ denotes the graph Laplacian and  feature matrix at time step k and $\Delta_{k}$ is the step size. The matrices $\mathbf{A}$ and $\mathbf{B}$ are state and input parameters, respectively.

\section{Knowledge Infused Hypergraph Reasoning with SSM}

In this section, we introduce the proposed \texttt{HoT-SSM} framework for modeling EHR data. We first present a knowledge infused temporal hypergraph representation, where domain knowledge guides the hyperedge construction while patient-specific graphs are instantiated from observed clinical data. We then introduce a dynamic hypergraph-based state space model (SSM) to capture temporal dependencies, followed by methods for extracting interpretable temporal reasoning paths. Figure~\ref{fig:architecture} illustrates the overall architecture.

\subsection{Knowledge Infused Hypergraph Representation of EHR Data} \label{sec:hypergraph_generation}
Traditional knowledge graph that captures the pairwise relationship between the entities, does not exploit the full complexities of the multi-way relationships among the clinical concepts, including conditions, procedures, and medications. To explicitly include these higher-order relations that is intrinsic to EHR data, we introduce a temporal hypergraph-based knowledge representation of it.

Let $\mathcal{P} = \{P_1, P_2, \ldots, P_N\}$ denote a set of $N$ patients with each patient $P_i$ being associated with a sequence of clinical visits over time. In particular, considering that patient $P_i$ has $T_i$ visits, indexed by $t \in \{1, 2, \ldots, T_i\}$. The clinical data corresponding to patient $P_i$ at visit $t$ is denoted by
$
C_{i,t} = \{ \mathcal{C}_{i,t}, \mathcal{D}_{i,t}, \mathcal{R}_{i,t} \},
$
where $\mathcal{C}_{i,t}$, $\mathcal{D}_{i,t}$, and $\mathcal{R}_{i,t}$ represent conditions, drugs, and procedures.\\

\begin{figure*}[t]
    \centering
    \includegraphics[width=\textwidth]{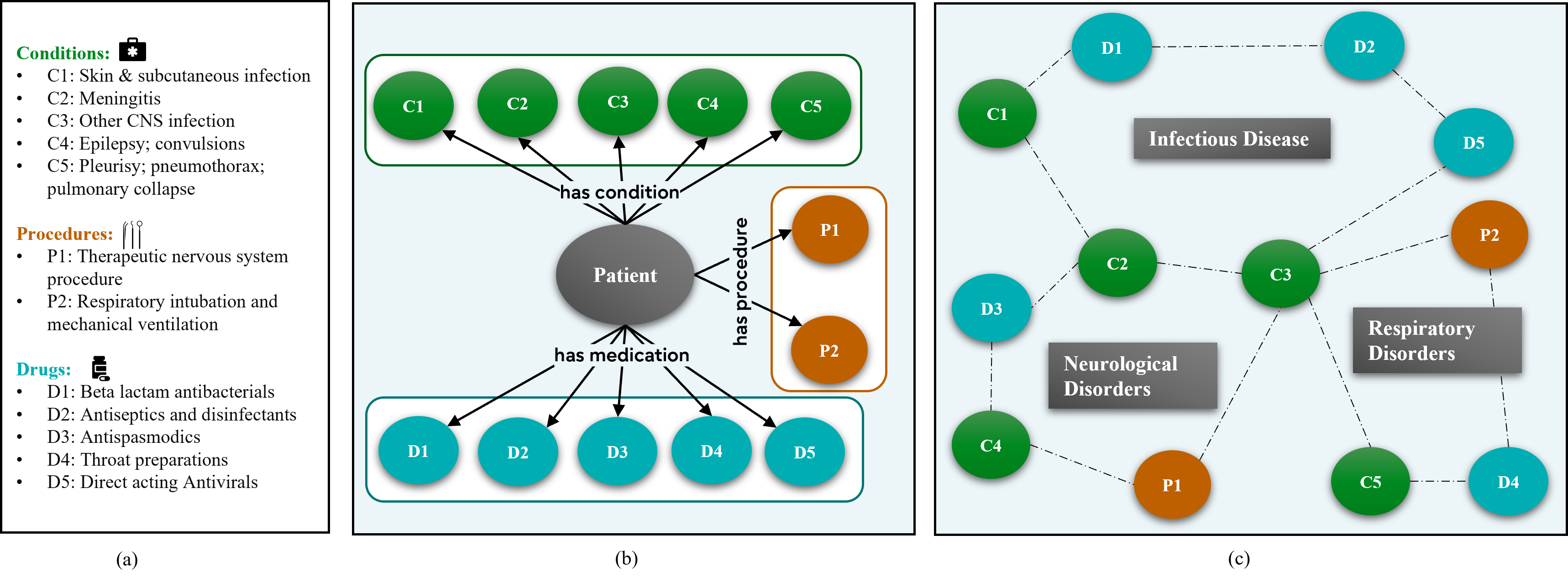}
    \caption{(a) Sample from EHR data. (b) Pairwise knowledge graph. (c) Knowledge hypergraph.}
\label{fig:hypergraph_motivation}
\end{figure*}

\textbf{Knowledge Infused Hypergraph Construction}: For each visit $t \in T_i$ of patient $P_i$, we construct a visit-level hypergraph to model higher-order clinical interactions through hyperedges that explicitly captures co-occurring conditions, medications, and procedures within a single visit. A naive approach would invoke a large language model (LLM) to construct a hypergraph independently for each visit. However, such a strategy incurs prohibitive token and computational costs. To address this limitation, we propose a cost-efficient two-stage  procedure:

\textit{Stage 1}: We first construct a \emph{global} hyperknowledge graph (HKG) over the EHR concept vocabulary by aggregating unique clinical concepts across all patients. Using LLM prompting, we group semantically related conditions, medications, and procedures into clinically coherent concept sets. Each group defines a hyperedge, capturing meaningful higher-order clinical context. The HKG can be expanded by pooling information from external sources such as PubMed and UMLS along with the derived medical ontologies.

\textit{Stage 2}: In the second step, the knowledge from global HKG is transferred to patient-specific hypergraph, instantiated for each visit, by selecting the subset of global hyperedges whose concepts are present in that visit. This approach of \emph{infusing} knowledge into the hypergraph avoids repeated LLM calls, thus optimizing the compute and cost that accompanies use of LLM. To explicitly encode visit-level context, we introduce a dedicated \emph{visit node} connected to all hyperedges in the corresponding hypergraph. Further details on the prompting strategy and construction are provided in Appendix~\ref{appendix: hyperknowledge_graph}.

Formally, for a visit $t$ of patient $P_i$, we define a hypergraph as $
\mathcal{H}_{i,t} = ({\mathcal{V}}_{i,t}, {\mathcal{E}}_{i,t})$,
where $\mathcal{V}_{i,t}$ denotes the set of unique clinical entities appearing in $C_{i,t}$  and visit node,  ${\mathcal{E}}_{i,t} = \{e_{i,t}^{(1)}, e_{i,t}^{(2)}, \ldots, e_{i,t}^{(K_{i,t})}\}$ is the set of hyperedges inferred by the LLM. A single visit can include multiple hyperedges, indicating that patient has multiple dissimilar concepts ($K_{i,t}$ for patient $i$). The hyperedge set is therefore a collection of unique concepts at time $t$ and is given by
$
\mathcal{E}_{i,t} = \bigcup_{k=1}^{K_{i,t}} e_{i,t}^{(k)}.
$
The structure of this hypergraph is encoded by an incidence matrix $\mathbf{I}_{i,t} \in \mathbb{R}^{|\mathcal{V}_{i,t}| \times |\mathcal{E}_{i,t}|}$. Node features are initialized using BERT embeddings \citep{devlin2019bertpretrainingdeepbidirectional} and projected to the latent space via a learnable linear layer.

Figure~\ref{fig:hypergraph_motivation}(c) illustrates a knowledge-infused hypergraph constructed from a single visit sample from the MIMIC-III dataset, with clinical concepts shown in Figure~\ref{fig:hypergraph_motivation}(a). In contrast, Figure~\ref{fig:hypergraph_motivation}(b) presents the corresponding pairwise KG, which models conditions, procedures, and medications independently despite their co-occurrence within the same visit. Consequently, it fails to capture the joint clinical context and interdependencies among factors such as neurological, infectious, and respiratory conditions. The knowledge-infused hypergraph addresses this limitation by encoding visit-level information through hyperedges that group clinically related concepts, enabling representation of higher-order disease patterns. For instance, in Figure~\ref{fig:hypergraph_motivation}(c), a hyperedge for respiratory disorders jointly connects pleurisy, respiratory intubation, mechanical ventilation, and associated medications. By modeling such multi-way interactions, the hypergraph provides a more expressive representation that supports richer clinical reasoning and improved downstream performance. \\

\textbf{Knowledge Infused Temporal Hypergraphs.}
For each patient $P_i$, we construct a sequence of visit-level hypergraphs ordered by time i.e., $
\mathcal{H}_i = \{ \mathcal{H}_{i,1}, \mathcal{H}_{i,2}, \ldots, \mathcal{H}_{i,T_i} \}.
$
This enables the proposed model to effectively capture the patient history as sequences of static hypergraphs.  Further, this allows us to apply dynamic representation learning technique to exploit the temporal evolution.

\begin{figure*}[t]
    \centering
    \includegraphics[width=\textwidth]{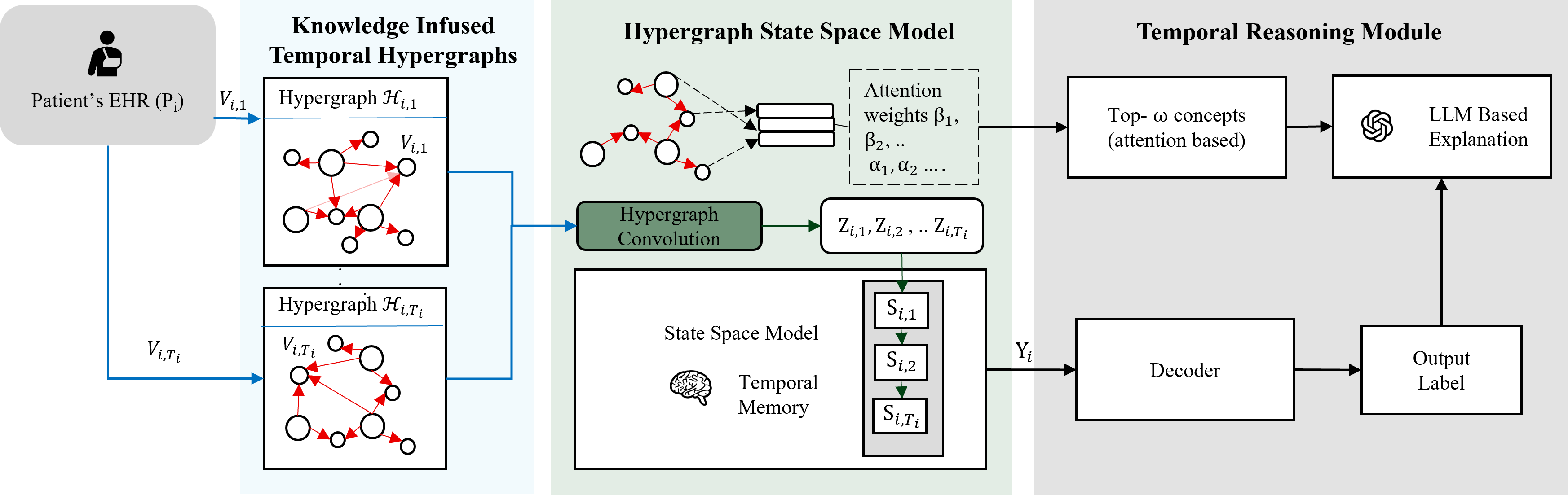}
    \caption{Model workflow, where patient specific temporal hypergraphs are processed through dynamic state space model to learn temporal representations, followed by decoder for prediction and reasoning module for explanation.}
    \label{fig:architecture}
    
\end{figure*}

\subsection{Representation Learning with Hypergraph based Dynamic State Space Model} \label{Sec:hypergraphssm}

In this section, we introduce proposed dynamic state-space model (SSM) over temporal hypergraphs.\\ 
\textbf{Static Hypergraph Embedding.}
We first obtain static representations for each visit leveraging the corresponding hypergraph as an inductive bias for capturing higher-order clinical dependencies.  Let $\mathbf{X}_{i,t} \in \mathbb{R}^{|\mathcal{V}_{i,t}| \times F}$,  denote the feature matrix for patient $i$ at time $t$ with $F$ being feature dimension, and $\mathbf{I}_{i,t} \in \mathbb{R}^{|\mathcal{V}_{i,t}| \times |\mathcal{E}_{i,t}|}$ denote the hypergraph incidence matrix. We compute node embeddings $\mathbf{Z}_{i,t} \in \mathbb{R}^{|\mathcal{V}_{i,t}|\times d}$, using a hypergraph convolution operator \texttt(HConv) \citep{bai2021hypergraph} as
\begin{equation}
\mathbf{Z}^{(l+1)}_{i,t} 
=
\sigma\!\left(
\mathbf{Dv}^{{-1/2}}_{i,t}
\mathbf{I}_{i,t}
\mathbf{W}
\mathbf{De}_{i,t}^{-1}
\mathbf{I}_{i,t}^{\top}
\mathbf{Z}^{(l)}_{i,t}
\boldsymbol{\Theta}^{(l+1)}
\right) ,
\label{eq:hyper_conv}
\end{equation}
where $\mathbf{Dv}_{i,t} \in \mathbb{R}^{|\mathcal{V}_{i,t}| \times |\mathcal{V}_{i,t}|}$ and $\mathbf{De}_{i,t} \in \mathbb{R}^{|\mathcal{E}_{i,t}| \times |\mathcal{E}_{i,t}|}$ denote the node and hyperedge degree matrices, respectively, $\mathbf{W}$ represents learnable hyperedge-specific weights, $\boldsymbol{\Theta}^{(l+1)}$ is the layer-specific trainable parameter matrix, and $\sigma(\cdot)$ is a non-linear activation function.

The update in Eq.~\eqref{eq:hyper_conv} implements a spectral hypergraph convolution, generalizing graph convolution to higher-order relational structures. Message passing is performed via hyperedges, allowing information to be aggregated across groups of related clinical concepts. Further stacking multiple such hypergraph convolution layers, we obtain visit node embeddings that encode rich structural information from the static hypergraphs. These embeddings serve as inputs to temporal state-space model, which captures the temporal evolution of clinical states across visits.
 
\begin{remark}\label{sec:remark}
The \texttt{HConv} operator aggregates information from neighboring nodes using uniform weights and therefore does not provide an explicit mechanism for identifying the most influential clinical concepts. To address this limitation, we introduce an attention-based variant \citep{ding2020more}, to learns weights for neighborhood aggregation and further can be leveraged to extract interpretable reasoning paths underlying the model’s predictions (more details in Sec~\ref{subsec: reasoning_paths}).
\end{remark}

\textbf{Dynamic State-Space Model over Temporal Hypergraphs}. The temporal hypergraphs associated with a patient is naturally represented as a sequence of static knowledge infused hypergraphs evolving over discrete time. We model this sequence as a discrete-time dynamic hypergraphs, enabling principled temporal representation learning over higher-order  structures.  

In particular for EHR data, clinically relevant signals may span long temporal horizons, necessitating framework for capturing long-range temporal information propagation. To address this, we adopt a state-space modeling (SSM) framework, which is well-suited for preserving and updating long-range temporal dependencies. Specifically, for a patient $P_i$ with sequence of hypergraphs as $
\mathcal{H}_i = \{ \mathcal{H}_{i,1}, \mathcal{H}_{i,2}, \ldots, \mathcal{H}_{i,T_i} \}$ the goal is to learn temporal latent representations $\mathbf{s}_{i} \in \mathbb{R}^{1 \times f}$ that summarize the patient’s clinical trajectory and support downstream prediction.

\textbf{HiPPO-based Memory Representation.}
To model long-range temporal dependencies, we employ a Higher-Order Polynomial Projection Operator (HiPPO) framework \citep{gu2020hippo}. HiPPO-based models compress the entire history of inputs into a fixed-dimensional memory by maintaining coefficients of a polynomial basis, which are updated through a state-space formulation.

Let $k\in \{1,2,\ldots T_{i}\}$ index the  discrete time clinical visits of patient $i$, with the $k$th visit at time instant $t_{k}$ and the interval between the visits as $\Delta_{k} = t_{k}-t_{k-1}$. Then the resulting discrete-time state-space update for patient $P_i$ (following \texttt{T-SSM} framework) is given by
\begin{equation}
\mathbf{s}_{i,k}
=
\mathbf{s}_{i,k-1} \exp(\Delta_k \mathbf{A})
+
\tilde{\mathbf{x}}_{i,k}
\left( \exp(\Delta_k \mathbf{A}) - \mathbf{I} \right)\mathbf{A}^{-1},
\label{eq:ssm_equation}
\end{equation}

where $\mathbf{A}\in\mathbb{R}^{f \times f}$,   is a HiPPO matrix and $
\tilde{\mathbf{x}}_{i,k} = \mathrm{HConv}_{v}(\mathbf{X}_{i,k}, \mathcal{H}_{i,k}) \mathbf{B}^{T} \in \mathbb{R}^{1 \times f}$, 
represents the hypergraph-aware embeddings obtained via the hypergraph convolution operator defined in Eq.~\eqref{eq:hyper_conv} with $\mathbf{B} \in \mathbb{R}^{f \times d}$ is a trainable input matrix.  Importantly $\mathrm{HConv}_{v}(\mathbf{X}_{i,k}, \mathcal{H}_{i,k}) \in \mathbb{R}^{1 \times d}$ output the visit node embedding by indexing the visit node in $\mathbf{Z}_{i,k}$.  It can be observed   
from Eq.~\eqref{eq:ssm_equation}, the proposed model jointly captures higher-order spatial dependencies within individual visits and long-range temporal dependencies across visits due to SSM modeling. Thus  the learned temporal representations are highly expressive and well-suited for downstream clinical tasks.

After processing the full visit sequence for patient $P_i$, the final output representation is obtained  as
$\mathbf{y}_i = \mathbf{C}\mathbf{s}_{i,K}^{T}$,
where $\mathbf{s}_{i,K}$ denotes the memory representation after the final visit and $\mathbf{C} \in \mathbb{R}^{d_{1} \times f} $ is a learnable output  matrix. Depending on the downstream task, the representation $\mathbf{y}_i$ is subsequently passed through a task-specific decoder to produce the final prediction.

\subsection{Temporal Reasoning Paths}
\label{subsec: reasoning_paths}
In this section, we discuss the proposed method for extracting temporal reasoning paths. We discuss two variants based on attention and gradient-based attribution.\\
\textbf{Attention based approach}:
To mitigate the challenges from \texttt{HConv} layer as discussed in Remark~\ref{sec:remark}, we introduce an attention based hypergraph convolution (\texttt{HGAT})\citep{bai2021hypergraph} that assigns learnable importance weights to nodes and hyperedges, which can be subsequently leveraged to extract reasoning paths \citep{ding2020more}.

Let $v_{r} \in \mathcal{V}_{i,t}$ denote a vertex $r$ (clinical concept) of patient $i$ at visit $t$. To learn the representation of node $r$, the attention-based hypergraph convolution operates in two stages:
(1) node to hyperedge aggregation, (2) hyperedge to node aggregation. In particular, the representation of node $r$ in layer $l$ i.e., $\mathbf{z}^{(l+1)}_{r} \in \mathbb{R}^{d \times 1}$ is obtained as 
\begin{equation}
\mathbf{z}^{(l+1)}_{r}
= \sigma\left(
\sum_{e^j \in \mathcal{E}_{i,t}(v^r)}
\beta_{r,j}
\mathbf{W}^{(2)} \hat{\mathbf{e}}^{(l+1)}_{j}
\right), \hspace{1cm} \beta_{r,j}
= \frac{
\exp\left(\mathbf{n}^{\top} \tilde{\mathbf{h}}_{i,j}\right)
}{
\sum_{e^q \in \mathcal{E}_{i,t}(v^r)}
\exp\left(\mathbf{n}^{\top} \tilde{\mathbf{h}}_{q,j}\right)
},
\end{equation}
where $\mathcal{E}_{i,t}(v^r)$ denotes set of hyperedges incident to node $v^r$, $e^{j}$ denotes $j^{th}$ hyperedge, $\hat{\mathbf{e}}^{(l+1)}_{j}$ is the updated representation of hyperedge $e^j$ (see Appendix \ref{appendix: attention_based_hykg} for detailed derivation) and $\mathbf{W}^{(2)} \in \mathbb{R}^{d \times d}$ is a learnable weight matrix. $\beta_{r,j}$ is the hyperedge to node attention coefficient that quantifies the influence of hyperedge $e^j$ on node $v^r$. Here 

$\mathbf{n}$ is a learnable vector and

$\tilde{\mathbf{h}}_{i,j}
= \text{LeakyReLU}\left(
\mathbf{W}^{(2)} \hat{\mathbf{e}}_{j}^{(l+1)} ||
\mathbf{W}^{(1)} \mathbf{z}^{(l)}_{r}
\right).$ Learning the coefficients $\beta_{r,j}$ enables the model to explicitly identify the most influential hyperedges contributing to final representation. To extract interpretable reasoning paths, we sort the learned attention coefficients of visit node and select the top-$\omega$ concepts with the highest attention scores using $(\alpha, \, \beta)$. Further, these top-$\omega$ concepts are used to generate the temporal reasoning paths as detailed in the Appendix~\ref{appendix: attention_based_hykg}. \\
\textbf{Gradient based approach}:\label{paragraph: grad_path} 
While the attention-based approach introduced above enables fine-grained identification of important nodes and hyperedges, it requires learning both node- and edge-level attention weights and therefore incurs substantially higher parameter and computational complexity than the \texttt{HConv} operator. In resource constrained settings, we instead adopt a gradient based attribution strategy that directly leverages the embeddings produced by \texttt{HConv}, without introducing additional learnable parameters. Let $\mathbf{x}^{(j)}_{i,k} \in \mathbb{R}^{d}$ denote the embedding of clinical entity $j$ for patient $i$ at visit $k$. To quantify the contribution of this entity to the final prediction $\hat{y}_i$, we compute the gradient of $\hat{y}_i$ with respect to $\mathbf{x}^{(j)}_{i,k}$ i.e, $\nabla_{\mathbf{x}^{(j)}_{i,k}} \hat{y}_{i}$. This gradient reflects how changes in an entity’s representation at a given visit propagate through the latent SSM dynamics and affect the final prediction.

Leveraging the gradients we then compute the saliency score \citep{shrikumar2017learning} which accounts for feature presence and sensitivity. In particular the saliency score $Sc(e, k)$ for entity $j$ of patient $i$ at visit $k$ is given by  $Sc(e, k) = \mathbf{x}^{(j)^{T}}_{i,k} \cdot \nabla_{\mathbf{x}^{(j)}_{i,k}} \hat{y}_{i}$.

A higher positive saliency score indicates that the corresponding clinical entity contributes more strongly toward increasing the outcome.

To translate saliency scores into an interpretable temporal reasoning path, we perform gradient-based backtracking using a set of \emph{anchors}, defined as the top-$\omega$ entities in the final visit $K$ with the highest saliency scores. For each anchor, we reconstruct a reasoning path by tracing backward through time and selecting the entity at visit $k$ i.e., $e_k^{*}$ which maximizes the saliency score $Sc(e,k)$.

\section{Theoretical Guarantees}
\begin{theorem}[Perturbation to hypergraph structures]
\label{thm:perturb}
Let $\hat{\mathbf{L}} = \mathbf{L}+\Delta\mathbf{L}$, be the perturbed hypergraph Laplacian with $\|\Delta\mathbf{L}\|_{2}\leq\epsilon$ and $\sigma(.)$ being Lipschitz continous with constant $C$,  number of layers as $L$ and weight parameter satisfies $\max_{l}||\mathbf{\Theta}^{(l)}||_{2}\leq K$. Further if the state matrices satisfy $\|\mathbf{B}\|_{2} \leq \gamma$ and $\|(exp(\Delta_{k}\mathbf{A})-\mathbf{I})\mathbf{A}^{-1}\| \leq \delta$ then the error between  representations are bounded as 
$\|\hat{\mathbf{s}}_{i,k} - {\mathbf{s}}_{i,k}\|_{2} \leq \rm{const}\cdot \epsilon ||\mathbf{X}_{i,k}||_{2}$, where   $\rm{const} = CKL\gamma\delta$ 
\end{theorem}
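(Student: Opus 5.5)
We compare the perturbed and unperturbed state recursions and reduce the problem to a perturbation bound for the $L$-layer \texttt{HConv} stack. Write the propagation operator in Eq.~\eqref{eq:hyper_conv} as $\mathbf{L}_{i,k}=\mathbf{Dv}^{-1/2}_{i,t}\mathbf{I}_{i,t}\mathbf{W}\mathbf{De}^{-1}_{i,t}\mathbf{I}^{\top}_{i,t}$, so that each layer reads $\mathbf{Z}^{(l+1)}=\sigma(\mathbf{L}\mathbf{Z}^{(l)}\boldsymbol{\Theta}^{(l+1)})$. The perturbed version replaces $\mathbf{L}$ by $\hat{\mathbf{L}}$ at the visit in question.

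Subtracting the two instances of Eq.~\eqref{eq:ssm_equation} gives
\[
\hat{\mathbf{s}}_{i,k}-\mathbf{s}_{i,k}=(\hat{\mathbf{s}}_{i,k-1}-\mathbf{s}_{i,k-1})\exp(\Delta_k\mathbf{A})+(\hat{\tilde{\mathbf{x}}}_{i,k}-\tilde{\mathbf{x}}_{i,k})(\exp(\Delta_k\mathbf{A})-\mathbf{I})\mathbf{A}^{-1}.
\]
Since the claimed constant has no dependence on $k$ or on $\exp(\Delta_k\mathbf{A})$, the statement is to be read as a one-step bound. Either the previous states are taken to coincide, or, equivalently, we bound the fresh error injected at step $k$. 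I will state this assumption explicitly.

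The injected term is at most $\delta\gamma\|\hat{\mathbf{Z}}^{(L)}-\mathbf{Z}^{(L)}\|_2$. This uses $\|\mathbf{B}\|_2\le\gamma$, the bound $\delta$ on $\|(\exp(\Delta_k\mathbf{A})-\mathbf{I})\mathbf{A}^{-1}\|$, and the fact that selecting the visit-node row does not increase the norm.

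The core step is a layerwise recursion. Let $E_l=\|\hat{\mathbf{Z}}^{(l)}-\mathbf{Z}^{(l)}\|_2$, with $E_0=0$ and $\mathbf{Z}^{(0)}=\mathbf{X}_{i,k}$. Lipschitz continuity of $\sigma$ and the splitting $\hat{\mathbf{L}}\hat{\mathbf{Z}}-\mathbf{L}\mathbf{Z}=\hat{\mathbf{L}}(\hat{\mathbf{Z}}-\mathbf{Z})+\Delta\mathbf{L}\,\mathbf{Z}$ give
\[
E_{l+1}\le CK\big(\|\hat{\mathbf{L}}\|_2E_l+\epsilon\|\mathbf{Z}^{(l)}\|_2\big).
\]
To obtain a constant linear in $L$, as opposed to the exponential $(CK)^L$, we need the normalized assumptions $\|\hat{\mathbf{L}}\|_2\le1$ (normalized Laplacian spectrum), $CK\le1$, and $\|\mathbf{Z}^{(l)}\|_2\le\|\mathbf{X}_{i,k}\|_2$. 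The last follows from $\|\mathbf{L}\|_2\le1$, $CK\le1$ and $\sigma(0)=0$. With these, $E_{l+1}\le E_l+CK\epsilon\|\mathbf{X}_{i,k}\|_2$, and unrolling gives $E_L\le CKL\epsilon\|\mathbf{X}_{i,k}\|_2$. Combining with the injected-term bound yields $\|\hat{\mathbf{s}}_{i,k}-\mathbf{s}_{i,k}\|_2\le CKL\gamma\delta\,\epsilon\|\mathbf{X}_{i,k}\|_2$.

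The main obstacle is this unrolling. Without normalization assumptions the honest constant is $\sum_{l}(CK)^{l}$ times powers of $\|\mathbf{L}\|_2$, not $CKL$. I would therefore first try to justify $\|\mathbf{L}\|_2,\|\hat{\mathbf{L}}\|_2\le1$ from the symmetric degree normalization of the hypergraph Laplacian, with $\mathbf{W}$ entries in $[0,1]$. I would then impose $CK\le1$ as an explicit (standard, e.g. ReLU with spectrally normalized weights) hypothesis. Failing that, I would present the bound with $\max(1,CK)^{L-1}$ absorbed into the constant.

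A secondary point is that the $\|\hat{\mathbf{L}}\|_2\le1$ requirement may fail by $\epsilon$. This contributes only an $O(\epsilon^2)$ term, which I would either neglect to first order or handle by replacing $1$ with $1+\epsilon$.
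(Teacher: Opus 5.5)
Your proposal follows the same route as the paper's proof. Both use a one-step SSM difference with the previous states taken equal (the paper drops that term silently in its final display), visit-row extraction combined with $\|\mathbf{B}\|_2\le\gamma$ and the $\delta$ bound, and a layerwise \texttt{HConv} perturbation estimate that accumulates the factor $L$; your propagation operator is the paper's $\mathbf{I}-\mathbf{L}$, whose perturbation is $-\Delta\mathbf{L}$, so nothing changes. The difference is only in rigor. The paper compares $\sigma((\mathbf{I}-\hat{\mathbf{L}})\mathbf{Z}^{(l)}\boldsymbol{\Theta}^{(l+1)})$ with $\sigma((\mathbf{I}-\mathbf{L})\mathbf{Z}^{(l)}\boldsymbol{\Theta}^{(l+1)})$ on the same unperturbed $\mathbf{Z}^{(l)}$ and then asserts the factor $L$ ``recursively''. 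Your splitting instead tracks inter-layer propagation and makes explicit the hypotheses that step silently needs: $\|\mathbf{I}-\hat{\mathbf{L}}\|_2\le 1$ (automatic if $\hat{\mathbf{L}}$ is itself a normalized Laplacian, so your $O(\epsilon^2)$ fallback is unnecessary), $CK\le 1$, and $\sigma(0)=0$. These are genuinely necessary: for a positively homogeneous $\sigma$ such as ReLU with $\boldsymbol{\Theta}^{(l)}=K\mathbf{I}$, the error scales like $K^{L}$, which no bound linear in $K$ can dominate when $L\ge 2$.
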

The proof of this theorem is relegated to the Appendix \ref{appendix: proof_pertub}. It shows that the error in between the representations learned by \texttt{HoT-SSM} scales linearly with the energy of the perturbation matrix \textit{i.e.}, $\epsilon$ demonstrating the robustness of the proposed model to structural perturbations.
\begin{Corollary}[Error accounting for temporal propagation]\label{sec:cor}
Assume additionally if the discretized state matrix satisfies $\|\rm{exp}(\Delta_{k})\mathbf{A}\|_{2} \leq \eta$, $\forall k$. Then for $K$ steps the error between the state representations accounting for temporal residual propogations are bounded as $ \|\hat{\mathbf{s}}_{i,K}-\mathbf{s}_{i,K}\|_{2} \leq \rm{const_{1}}\epsilon$, where $\rm{const_1}=L K C \delta \gamma \sum_{\tau=1}^{K} \eta^{K-\tau}\|\mathbf{X}_{i,\tau}\|_{2}$. 
\end{Corollary}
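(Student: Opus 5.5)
The plan is to unroll the recursion in Eq.~\eqref{eq:ssm_equation} for both the clean and the perturbed trajectories, then control each injected error with Theorem~\ref{thm:perturb}. I take both trajectories from the same initial state $\mathbf{s}_{i,0}=\hat{\mathbf{s}}_{i,0}$ (e.g.\ zero). Write $\mathbf{e}_k = \hat{\mathbf{s}}_{i,k}-\mathbf{s}_{i,k}$. Only the hypergraph Laplacian is perturbed, while $\mathbf{A}$, $\mathbf{B}$, $\Delta_k$ are shared. Subtracting the two updates therefore gives the linear error recursion
\begin{equation*}
\mathbf{e}_k = \mathbf{e}_{k-1}\exp(\Delta_k\mathbf{A}) + \big(\hat{\tilde{\mathbf{x}}}_{i,k}-\tilde{\mathbf{x}}_{i,k}\big)\big(\exp(\Delta_k\mathbf{A})-\mathbf{I}\big)\mathbf{A}^{-1}.
\end{equation*}
The key point is that the SSM itself is linear in the state, so errors only enter through the hypergraph-convolution input term and are then transported by the state matrix.

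\textbf{Step 1 (per-step injected error).} The injected term at step $k$ is exactly the one-step quantity controlled in the proof of Theorem~\ref{thm:perturb}. That proof bounds $\|\mathrm{HConv}_v(\mathbf{X}_{i,k},\hat{\mathbf{L}})-\mathrm{HConv}_v(\mathbf{X}_{i,k},\mathbf{L})\|_2 \le CKL\epsilon\|\mathbf{X}_{i,k}\|_2$ layer by layer. Multiplying by $\|\mathbf{B}\|_2\le\gamma$ and $\|(\exp(\Delta_k\mathbf{A})-\mathbf{I})\mathbf{A}^{-1}\|\le\delta$ gives
\begin{equation*}
\big\|(\hat{\tilde{\mathbf{x}}}_{i,k}-\tilde{\mathbf{x}}_{i,k})(\exp(\Delta_k\mathbf{A})-\mathbf{I})\mathbf{A}^{-1}\big\|_2 \le CKL\gamma\delta\,\epsilon\,\|\mathbf{X}_{i,k}\|_2 .
\end{equation*}
This is precisely the statement of Theorem~\ref{thm:perturb} when $\mathbf{e}_{k-1}=0$. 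I will quote it in that form rather than redo the layerwise argument.

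\textbf{Step 2 (unrolling).} Iterating the recursion from $\mathbf{e}_0=0$ gives
\begin{equation*}
\mathbf{e}_K=\sum_{\tau=1}^{K}\mathbf{u}_\tau\prod_{j=\tau+1}^{K}\exp(\Delta_j\mathbf{A}),
\end{equation*}
where $\mathbf{u}_\tau$ is the injected term at step $\tau$. Using submultiplicativity and the assumption $\|\exp(\Delta_k\mathbf{A})\|_2\le\eta$, the product is bounded by $\eta^{K-\tau}$. The triangle inequality then yields
\begin{equation*}
\|\mathbf{e}_K\|_2\le CKL\gamma\delta\,\epsilon\sum_{\tau=1}^{K}\eta^{K-\tau}\|\mathbf{X}_{i,\tau}\|_2 ,
\end{equation*}
which is the claimed $\mathrm{const}_1\,\epsilon$. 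Equivalently, I can proceed by induction on $K$ with the hypothesis $\|\mathbf{e}_{K-1}\|_2\le CKL\gamma\delta\epsilon\sum_{\tau\le K-1}\eta^{K-1-\tau}\|\mathbf{X}_{i,\tau}\|_2$. Multiplying by $\eta$ and adding the step-$K$ injection closes the induction.

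\textbf{Main obstacle: bookkeeping of constants.} The statement reuses the symbol $K$ both for the weight bound $\max_l\|\mathbf{\Theta}^{(l)}\|_2$ and for the horizon. I will be explicit that the $K$ in $\mathrm{const}_1$ is the weight bound carried over from Theorem~\ref{thm:perturb}. I must also check that Theorem~\ref{thm:perturb} really bounds the injected term alone, with no dependence on $\mathbf{e}_{k-1}$, so that it can be reused at every step.

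One further subtlety is that the perturbation $\Delta\mathbf{L}$ must satisfy $\|\Delta\mathbf{L}\|_2\le\epsilon$ uniformly over visits. I will state this explicitly, since each visit has its own Laplacian.
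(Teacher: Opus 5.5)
Your proposal is correct and follows essentially the paper's argument: both start the clean and perturbed trajectories from the same initial state, bound each step's injected error by the Theorem~\ref{thm:perturb} quantity $LKC\gamma\delta\epsilon\|\mathbf{X}_{i,\tau}\|_2$, and propagate earlier errors through factors $\|\exp(\Delta_k\mathbf{A})\|_2\le\eta$ to obtain the weighted sum $\sum_{\tau=1}^{K}\eta^{K-\tau}\|\mathbf{X}_{i,\tau}\|_2$. Your version is slightly cleaner than the paper's, since you write the error recursion explicitly with the triangle inequality (where the paper writes an equality) and you avoid the stray factor $\eta$ the paper puts in its first-step bound; your remarks on the overloaded symbol $K$ and on needing $\|\Delta\mathbf{L}\|_2\le\epsilon$ uniformly over visits are also well taken.
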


    We relegate the proof to Appendix~\ref{sec:proof_cor}. It can be observed that the error remains
linear in the energy of perturbation matrix magnitude $\epsilon$ even while accounting
temporal cascading effects.

\begin{remark}
Theorem~\ref{thm:perturb} and Corollary~\ref{sec:cor} characterize the error in terms of energy of the perturbation energy. Extending this analysis, we derive an alternative bound expressed in terms of the spectrum of the hypergraph Laplacian (see Appendix~\ref{sec:proof_spectrum} for details).
\end{remark}
\begin{theorem}[Permutation invariance]
\label{thm:permutation_invariance}
 Let \(\mathcal{P} = \{\,\mathbf{P}\in\{0,1\}^{|\mathcal{V}_{i,t}|\times |\mathcal{V}_{i,t}|} : \mathbf{P}^\top\mathbf{P} = \mathbf{P}\mathbf{P}^\top = \mathbf{I}_{|\mathcal{V}_{i,t}|}\}\) be the set of all valid  permutation matrices . Under the permutation of node indices of the visit-level hypergraph, the representation from the \texttt{HConv} operator remains equivariant, i.e., $
\mathbf{Z}_{i,t}^{\mathrm{perm}} = \mathbf{P}\mathbf{Z}_{i,t}.
$
However, the output of the dynamic hypergraph SSM module remains invariant.
\end{theorem}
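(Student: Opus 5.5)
The plan is to track how a node relabelling $\mathbf{P}\in\mathcal{P}$ acts on every object in Eq.~\eqref{eq:hyper_conv}, and then follow the visit-node embedding through the SSM recursion Eq.~\eqref{eq:ssm_equation}. Relabelling the vertices replaces the inputs by $\mathbf{X}^{\mathrm{perm}}_{i,t}=\mathbf{P}\mathbf{X}_{i,t}$ and $\mathbf{I}^{\mathrm{perm}}_{i,t}=\mathbf{P}\mathbf{I}_{i,t}$. The hyperedge set is untouched, so $\mathbf{W}$ is unchanged. The hyperedge degrees are column sums of the incidence matrix, and these are unaffected by row permutations, so $\mathbf{De}^{\mathrm{perm}}_{i,t}=\mathbf{De}_{i,t}$. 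The vertex degrees are row sums, hence they are permuted: $\mathbf{Dv}^{\mathrm{perm}}_{i,t}=\mathbf{P}\mathbf{Dv}_{i,t}\mathbf{P}^\top$. Since $\mathbf{P}$ is orthogonal and the matrix is diagonal, this conjugation commutes with taking the power $-1/2$, giving $(\mathbf{Dv}^{\mathrm{perm}})^{-1/2}=\mathbf{P}\mathbf{Dv}^{-1/2}\mathbf{P}^\top$.

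Next I would prove by induction on $l$ that $\mathbf{Z}^{(l),\mathrm{perm}}_{i,t}=\mathbf{P}\mathbf{Z}^{(l)}_{i,t}$. The base case is $\mathbf{Z}^{(0)}=\mathbf{X}$. For the inductive step, substitute the permuted quantities into the pre-activation:
\[
\mathbf{P}\mathbf{Dv}^{-1/2}\mathbf{P}^\top\mathbf{P}\mathbf{I}\mathbf{W}\mathbf{De}^{-1}\mathbf{I}^\top\mathbf{P}^\top\mathbf{P}\mathbf{Z}^{(l)}\boldsymbol{\Theta}^{(l+1)}=\mathbf{P}\bigl(\mathbf{Dv}^{-1/2}\mathbf{I}\mathbf{W}\mathbf{De}^{-1}\mathbf{I}^\top\mathbf{Z}^{(l)}\boldsymbol{\Theta}^{(l+1)}\bigr),
\]
using $\mathbf{P}^\top\mathbf{P}=\mathbf{I}$. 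The activation $\sigma$ acts entrywise and a permutation matrix only reorders rows, so $\sigma(\mathbf{P}\mathbf{M})=\mathbf{P}\sigma(\mathbf{M})$. This closes the induction and gives the equivariance claim. The same reasoning covers the attention variant, because its scores depend only on the features and on incidence membership, not on the indices.

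For invariance, note that $\mathrm{HConv}_v$ reads off the row corresponding to the visit node. Under relabelling, the visit node $v$ moves to index $\pi(v)$. The indexing operator is defined by node identity, not by position, so it must be read as $\mathbf{e}_{\pi(v)}^\top\mathbf{Z}^{\mathrm{perm}}$. Since $\mathbf{e}_{\pi(v)}^\top\mathbf{P}=\mathbf{e}_v^\top$, this returns exactly the original visit embedding. Consequently $\tilde{\mathbf{x}}_{i,k}$ is unchanged at every visit $k$. The recursion Eq.~\eqref{eq:ssm_equation} involves only $\tilde{\mathbf{x}}_{i,k}$, $\mathbf{A}$, $\mathbf{B}$ and $\Delta_k$, none of which depend on the node ordering. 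A trivial induction on $k$ from a fixed initial state $\mathbf{s}_{i,0}$ therefore gives $\mathbf{s}^{\mathrm{perm}}_{i,k}=\mathbf{s}_{i,k}$ for all $k$, and hence $\mathbf{y}_i$ is invariant. Permutations may be chosen independently at each visit, and the argument still applies.

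The main obstacle is conceptual rather than computational: stating precisely what "indexing the visit node" means, so that the readout commutes with $\mathbf{P}$. I would make explicit that the readout is the functional $\mathbf{e}_{v}^\top$ attached to the visit node's identity. If the visit node were instead taken to be a fixed position, invariance would fail, and this is the point the argument hinges on.
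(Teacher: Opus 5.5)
Your proof is correct and follows the same route as the paper. Both arguments conjugate the normalized hypergraph operator by $\mathbf{P}$, push $\mathbf{P}$ through the row-wise activation layer by layer to obtain $\mathbf{Z}^{\mathrm{perm}}_{i,t}=\mathbf{P}\mathbf{Z}_{i,t}$, and then note that the visit-node readout (the paper's $\mathbf{e}_v^\top\mathbf{P}^\top$, which is your $\mathbf{e}_{\pi(v)}^\top$) cancels $\mathbf{P}$, so the SSM inputs and therefore its output are unchanged.

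You also supply details the paper only asserts: you derive $\mathbf{E}^{\mathrm{perm}}_{i,t}=\mathbf{P}\mathbf{E}_{i,t}\mathbf{P}^\top$ from the incidence and degree matrices, and you make explicit both that the readout is defined by the visit node's identity and the induction over visits.
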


    The proof of this theorem is relegated to Appendix \ref{appendix: proof_perm_invariance}. It states that if node labels are permuted then only the representations from \texttt{Hconv} are relabeled whereas SSM output remains unchanged.

\section{Numerical Experiments}
In this section, we present results on the MIMIC III \citep{mimic3} and MIMIC IV \citep{mimic4} datasets  with downstream tasks as mortality prediction, length of stay (LOS) prediction, drug recommendation and readmission task. More details on datasets and tasks in Appendix \ref{appendix: dataset_traning_details}.

\begin{table}[t]

\begin{minipage}{0.49\linewidth}
\centering
\caption{Mortality Prediction.}
\setlength{\tabcolsep}{0.75pt}
\resizebox{\columnwidth}{!}{%
\begin{tabular}{lcccc}
\toprule
\textbf{Model} & \multicolumn{2}{c}{\textbf{MIMIC-III}} & \multicolumn{2}{c}{\textbf{MIMIC-IV}} \\
\cmidrule(lr){2-3} \cmidrule(lr){4-5}
 & AUPRC & AUROC & AUPRC & AUROC \\
\midrule
GRU        & 11.8$_{(0.5)}$ & 61.3$_{(0.9)}$ & 4.2$_{(0.1)}$ & 69.0$_{(0.8)}$ \\
Transformer& 10.1$_{(0.9)}$ & 57.2$_{(1.3)}$ & 4.0$_{(0.4)}$ & 65.1$_{(1.2)}$ \\
RETAIN     & 9.6$_{(0.6)}$  & 59.4$_{(1.5)}$ & 3.8$_{(0.4)}$ & 64.8$_{(1.6)}$ \\
GRAM       & 11.4$_{(0.7)}$ & 60.4$_{(0.9)}$ & 4.4$_{(0.3)}$ & 66.7$_{(0.7)}$ \\
Deepr      & 13.2$_{(1.1)}$ & 60.8$_{(0.4)}$ & 4.2$_{(0.2)}$ & 68.9$_{(0.9)}$ \\
StageNet   & 12.4$_{(0.3)}$ & 61.5$_{(0.7)}$ & 4.2$_{(0.3)}$ & 69.6$_{(0.8)}$ \\
GraphCARE  & 16.7$_{(0.5)}$ & 70.3$_{(0.5)}$ & 6.7$_{(0.3)}$ & 73.1$_{(0.5)}$ \\
\midrule
\texttt{HoT-SSM (v1)} & {\color{blue}\textbf{34.4}$_{(1.0)}$} & {\color{blue}\textbf{74.2}$_{(0.6)}$} & 23.8$_{(0.9)}$ & {\color{blue}\textbf{84.7}$_{(0.2)}$} \\
\texttt{HoT-SSM (v2)} & 32.8$_{(2.3)}$ & 73.2$_{(1.5)}$ & {\color{blue}\textbf{24.4}$_{(0.8)}$} & 82.8$_{(0.1)}$ \\
\bottomrule
\label{tab:mortality}
\end{tabular}
}
\end{minipage}
\hfill
\begin{minipage}{0.50\linewidth}
\centering
\caption{Drug Recommendation}
\setlength{\tabcolsep}{0.75pt}
\resizebox{\columnwidth}{!}{%
\begin{tabular}{lcccc}
\toprule
\textbf{Model} &  \multicolumn{2}{c}{\textbf{MIMIC-III}} & \multicolumn{2}{c}{\textbf{MIMIC-IV}} \\
\cmidrule(lr){2-3} \cmidrule(lr){4-5}
 & Jaccard & F1 & Jaccard & F1 \\
\midrule
GRU          & 47.8$_{(0.3)}$ & 60.2$_{(0.2)}$ & 44.0$_{(0.4)}$ & 60.2$_{(0.2)}$ \\
Transformer  & 47.1$_{(0.4)}$ & 55.9$_{(0.2)}$ & 40.4$_{(0.1)}$ & 55.9$_{(0.2)}$ \\
RETAIN       & 48.8$_{(0.2)}$ & 60.3$_{(0.1)}$ & 45.0$_{(0.1)}$ & 60.3$_{(0.1)}$ \\
GRAM         & 47.9$_{(0.3)}$ & 60.1$_{(0.2)}$ & 45.3$_{(0.3)}$ & 60.1$_{(0.1)}$ \\
Deepr        & 44.7$_{(0.3)}$ & 59.1$_{(0.4)}$ & 43.8$_{(0.4)}$ & 59.1$_{(0.4)}$ \\
StageNet     & 45.8$_{(0.4)}$ & 60.2$_{(0.3)}$ & 45.4$_{(0.4)}$ & 60.2$_{(0.3)}$ \\
GraphCARE    & 49.8$_{(0.4)}$ & 63.9$_{(0.3)}$ & 48.1$_{(0.3)}$ & 63.9$_{(0.3)}$ \\
\midrule
\texttt{HoT-SSM (v1)} & 52.2$_{(0.5)}$ & 66.5$_{(0.4)}$ & {50.6}$_{(0.6)}$ & {64.1}$_{(0.8)}$ \\
\texttt{HoT-SSM (v2)} & {\color{blue}\textbf{52.6}$_{(0.5)}$} & {\color{blue}\textbf{66.9}$_{(0.5)}$} & {\color{blue}\textbf{54.3}$_{(0.4)}$} & {\color{blue}\textbf{68.3}$_{(0.4)}$} \\
\bottomrule
\label{tab:drug_rec}
\end{tabular}
}
\end{minipage}
\end{table}

\textbf{Implementation details}. Visit-level hypergraphs for each patient are generated using Azure GPT-4.1, following the same procedure as described in Section~\ref{sec:hypergraph_generation} with prompts as in Fig.~\ref{fig:taxonomy_prompt} and Fig~\ref{fig:mapping_prompt}. These hypergraphs are then leveraged  to compute spatial representations via either the \texttt{HConv} or \texttt{HGAT} operators \citep{bai2021hypergraph}. Accordingly, we consider two variants of the proposed framework: \texttt{HoT-SSM} (v1), which employs \texttt{HConv} as the spatial encoder, and \texttt{HoT-SSM} (v2), which employs \texttt{HGAT}. The resulting spatial representations are propagated over time using the hypergraph-based state-space model (SSM) to produce the final predictions. Whereas the temporal reasoning paths are generated following the procedure in Sec~\ref{subsec: reasoning_paths}.  Details of hyperparameters and training configurations are provided in Appendix~\ref{appendix: training_details}.

\subsection{Discussion on Results}
In Tables~\ref{tab:mortality}, \ref{tab:drug_rec}, and \ref{tab:los} we present results (mean and standard deviation in brackets $(.)$) on three downstream clinical prediction tasks on the MIMIC-III and MIMIC-IV datasets. It can be observed that \texttt{HoT-SSM} demonstrates strong performance across all tasks compared to the baselines. In particular, it is important to notice that on mortality prediction, the proposed model achieves a improvement of \textbf{+17.7\% in AUPRC} and \textbf{+3.9\% in AUROC} over \texttt{GraphCARE} on the MIMIC-III dataset, asserting the importance of capturing higher-order relations for critical tasks. Whereas, on MIMIC-IV, we observe that the relative gains are much stronger with \textbf{+17.1\% in AUPRC} and \textbf{+11.6\% in AUROC} improvements  compared to MIMIC-III, as the dataset includes patients with more visits, which showcases the ability of the model to capture long-range patient history. It can also be observed that the proposed model outperforms state-of-the-art algorithms on other clinical tasks, asserting the importance of hypergraphs and SSMs.  Additional results on readmission prediction task are provided in Appendix~\ref{sec:readmission_task}. 

\subsection{Ablation Study 1: Effect of Hypergraph Modeling}
To study the impact of modeling higher-order clinical relationships with hypergraphs, we compare our proposed \texttt{HoT-SSM} framework against its graph-based counterpart, denoted as \texttt{GNN-SSM}. The latter models clinical relationships using knowledge graphs, similar to \texttt{GraphCare}~\citep{Graphcare}. Table~\ref{graph_ssm} reports the results on the mortality prediction task for both the MIMIC-III and MIMIC-IV datasets. As shown, \texttt{HoT-SSM} consistently outperforms \texttt{GNN-SSM}, demonstrating the advantage of explicitly modeling EHR data through hypergraphs.

\begin{table}[t]
\begin{minipage}[0.10\textheight]{0.55\linewidth}
\setlength{\tabcolsep}{1pt}
\caption{Performance on LOS prediction task.}
\begin{tabular}{lcccc}
\toprule
\multirow{2}{*}{\textbf{Model}} & \multicolumn{2}{c}{\textbf{MIMIC-III}} & \multicolumn{2}{c}{\textbf{MIMIC-IV}} \\
\cmidrule(lr){2-3} \cmidrule(lr){4-5}
 & \textbf{Kappa} & \textbf{F1} & \textbf{Kappa} & \textbf{F1} \\
\midrule
GRU         & 26.2$_{(0.2)}$ & 34.9$_{(0.5)}$ & 26.0$_{(0.1)}$ & 31.6$_{(0.2)}$ \\
Transformer & 25.4$_{(0.4)}$ & 34.8$_{(0.2)}$ & 25.3$_{(0.4)}$ & 31.4$_{(0.3)}$ \\
RETAIN      & 26.1$_{(0.4)}$ & 34.9$_{(0.4)}$ & 26.3$_{(0.2)}$ & 32.0$_{(0.2)}$ \\
GRAM        & 26.3$_{(0.3)}$ & 34.5$_{(0.2)}$ & 26.1$_{(0.4)}$ & 31.9$_{(0.3)}$ \\
Deepr       & 25.3$_{(0.4)}$ & 35.0$_{(0.4)}$ & 26.4$_{(0.2)}$ & 32.3$_{(0.1)}$ \\
StageNet    & 24.8$_{(0.3)}$ & 34.4$_{(0.4)}$ & 26.0$_{(0.2)}$ & 31.3$_{(0.3)}$ \\
GraphCARE   & 29.5$_{(0.4)}$ & 37.5$_{(0.2)}$ & {\color{blue}\textbf{29.8}$_{(0.3)}$} & {\color{blue}\textbf{34.2}$_{(0.3)}$} \\
\midrule
\texttt{HoT-SSM (v1)} & 30.2$_{(0.8)}$ & {37.7}$_{(0.8)}$ & 27.5$_{(0.3)}$ & {34.0}$_{(0.3)}$ \\
\texttt{HoT-SSM (v2)} & {\color{blue}\textbf{31.0}$_{(0.7)}$} & {\color{blue}\textbf{38.1}$_{(0.8)}$} & 28.8$_{(0.4)}$ & 33.5$_{(0.5)}$ \\
\bottomrule
\label{tab:los}
\end{tabular}
\end{minipage}
\end{table}
\hfill

\subsection{Ablation study 2: Effect of using SSM}
To examine the impact of long-range temporal information propagation on predictive performance, we conduct an ablation study on a subset of patients with extended clinical histories (those with more than 10 recorded visits) with the mortality prediction task on the MIMIC-IV dataset. This setting is particularly challenging, as accurate prediction requires preserving clinically relevant information over long temporal horizons. We compare \texttt{HoT-SSM} against two architectural variants with SSM module is replaced by (i)  Long Short-Term Memory (LSTM), (\texttt{HoT-LSTM}), and (ii) \texttt{Hyper-GNN}, which directly aggregates visit-level hypergraph representations without temporal modelling.  In Table~\ref{tab:ablation_long_range}, we report the results where it can be observed that \texttt{HoT-SSM} consistently outperforms both variants across all evaluation metrics. Notably, \texttt{HoT-SSM} achieves  larger gains with \textbf{+15\% in Sensitivity} and  \textbf{+6.5\% in Macro-F1 score} compared to the LSTM- baseline, thereby asserting the importance of proposed SSM framework for effectively capturing the long range clinical histories  especially in critical tasks such as mortality prediction.

\begin{figure}[b]
\centering
{
\begin{tcolorbox}
Mortality label=1, Predicted label=1 \\
\textbf{Visit 1:} \\
$\hookrightarrow$ \textcolor{blue}{\textbf{Entities:}} \textit{respiratory failure, insufficiency arrest} \\
\hspace{0.9cm}$\downarrow$ \\
\textbf{Visit 2:} \\
$\hookrightarrow$ \textcolor{blue}{\textbf{Entities:}} \textit{Respiratory intubation, mechanical ventilation} \\
\hspace{0.9cm}$\downarrow$ \\
\textbf{Visit 3:} \\
$\hookrightarrow$ \textcolor{blue}{\textbf{Entities:}} \textit{Other liver diseases, acute and unspecified renal failure} \\
\textbf{LLM explanation:} The model identifies this patient as high-risk due to a clinical trajectory consistent with multi-organ system failure. The sequence progresses from respiratory arrest requiring immediate resuscitation to mechanical ventilation for life support, culminating in liver disease at the final visit.
\end{tcolorbox}}
\caption{Gradient based temporal reasoning path.}
\label{fig:grad_reasoning_path}
\end{figure}
Further, to ensure that the observed gains are not solely due to LLM-based hyperedge construction, we investigate the following questions (see Appendix~\ref{sec:static_hypergraphs}).

\fcolorbox{black!50}{gray!10}{
\parbox{0.95\linewidth}{

\begin{itemize}
    \item \textbf{Q.1}. How does \texttt{HoT-SSM} perform with non-LLM based hypergraph constructions?
    \item \textbf{Q.2}.  How robust and reliable are the performance gains?
\end{itemize}
}} 
\subsection{Example Temporal Reasoning Path}
The temporal reasoning module in the \texttt{HoT-SSM} pipeline extracts the reasoning paths by gradient or attention based methods. In Fig.~\ref{fig:grad_reasoning_path}, we present the reasoning path generated by gradient based approach that selects the top-2 most influential clinical entities per visit based on gradient contribution for a representative mortality case from the MIMIC-III dataset. Grounding the reasoning abilities to extracted reasoning chains we also show an LLM output explaining the reasons for models prediction towards high mortality risk. More examples on temporal reasoning paths are presented  in Appendix~\ref{appendix: reasoning_path_examples}.
\begin{table}\begin{minipage}[0.2\textheight]{0.45\linewidth}
\vspace{0pt}
\setlength{\tabcolsep}{1pt}
\caption{Ablation on hypergraphs and graphs.}

\begin{tabular}{lcccc}
\toprule
\textbf{Model} & \multicolumn{2}{c}{\textbf{MIMIC-III}} & \multicolumn{2}{c}{\textbf{MIMIC-IV}} \\
\cmidrule(lr){2-3} \cmidrule(lr){4-5}
 & AUPRC & AUROC & AUPRC & AUROC \\
\midrule
\texttt{GNN-SSM}  & 22.9 & 69.5 & 19.8 & 77.4 \\
\rowcolor{red!15}\texttt{HoT-SSM} & \textbf{34.4} & \textbf{74.2} & \textbf{23.7} & \textbf{84.6} \\
\bottomrule
\label{graph_ssm}
\end{tabular}

\vspace{8pt}
\caption{Ablation on long-range task.}

\begin{tabular}{lccc}
\toprule
\textbf{Model} & \textbf{AUPRC} & \textbf{Macro F1} & \textbf{Sensitivity} \\
\midrule
Hyper-GNN & 60.6 & 62.8 & 25.0 \\
HoT-LSTM     & 65.1 & 68.9 & 40.0 \\
\rowcolor{red!15}\texttt{HoT-SSM} & \textbf{67.1} & \textbf{75.4} & \textbf{55.0} \\
\bottomrule
\label{tab:ablation_long_range}
\end{tabular}

\end{minipage}
\end{table}

\section{Conclusions}

We proposed \texttt{HoT-SSM}, a novel higher-order temporal reasoning framework that integrates hypergraph modeling with state-space models for healthcare. The proposed framework constructs knowledge infused temporal hypergraphs to encode co-occurring clinical concepts within visits and leverages a novel hypergraph-based SSM to capture long-range dependencies across patient visits. We also extract reasoning paths based on attention and gradient-based approaches, allowing LLM-assisted explanation for the interpretable inference. Results on MIMIC-III and MIMIC-IV demonstrate that \texttt{HoT-SSM} effectively captures higher-order relations and long-range temporal information, leading to substantial gains in the predictive performance.

\bibliographystyle{unsrtnat}
\bibliography{ref}

\newpage
\onecolumn

\appendix

\section*{\centering APPENDIX} 


\noindent\makebox[\linewidth]{\rule{\textwidth}{0.4pt}}

\begin{enumerate}
    \item Prior Works \dotfill \pageref{appendix: prior_works}
    \item Proof of Theorems \dotfill \pageref{appendix: Proof of Theorems}
    \item Implementation Details \dotfill \pageref{appendix: implementation_details}
    \item Dataset and Training Details \dotfill \pageref{appendix: dataset_traning_details}
    \item Additional Experiments \dotfill \pageref{appendix: additional_experiments}
    \item Reasoning Path Examples \dotfill \pageref{appendix: reasoning_path_examples}
    \item Future Directions \dotfill \pageref{appendix: future_directions}
\end{enumerate}
\section{Prior Works}
\label{appendix: prior_works}
Prior research on electronic health record (EHR) data for downstream clinical prediction and recommendation tasks can be broadly categorized into representation learning-based approaches and  methods centered on KGs and LLMs.
\paragraph{Representation Learning–Based Approaches.}
Early work in this category applied classical machine learning models and deep learning architectures to learn low-dimensional representations of patient data, which were subsequently used for downstream clinical tasks \citep{ wong2018using, mcwilliams2019towards, choi2016doctor, choi2016retain, zhang2021grasp, stagenet, ashrafi2024optimizing, chung2014empirical}. While effective, these methods largely overlook the relational graph structure underlying the data.

To explicitly model such relationships, graph neural network (GNN)-based approaches have been proposed, where clinical entities are represented as nodes and their interactions as edges \citep{choi2020learning, su2020gate, li2020graph, zhu2021variationally, yang2023molerec, xie2019ehr}. These models construct patient-specific or visit-level graphs from EHR data and leverage message passing to learn relational representations. However, most representation learning-based approaches operate on local graphs derived solely from EHR records and do not incorporate external medical knowledge sources, such as PubMed or UMLS, which contain rich and complementary relational information.  Recent works leverage hypergraphs \citep{hyperehr,hypkg} to model EHR data. However they model each visit as a single hyperedge and multiple visits through static hypergraph, thereby failing to explicitly capture intra-visit structure (whose importance is discussed in the Introduction and Section~\ref{sec:hypergraph_generation}) as well as inter-visit temporal dynamics. This limitation restricts their ability to effectively model disease progression. More importantly, these approaches lack an explicit mechanism to capture long-range patient information over time.

\paragraph{Approaches centered on Knowledge Graphs and LLMs}
Personalized medical knowledge graphs (MKGs) provide a structured representation of EHR data by explicitly modeling semantic relationships among clinical entities. Early approaches in this direction constructed MKGs using predefined hierarchical or rule-based schemas and demonstrated notable performance improvements on downstream clinical tasks \citep{ping2017individualized,gyrard2018personalized,shirai2021applying}. Despite their effectiveness, these methods are limited in their ability to capture complex and higher-order clinical interactions.

With recent advances in large language models (LLMs) pretrained on extensive medical corpora, more recent work has explored the construction of personalized knowledge graphs using LLMs, often augmented with external biomedical resources such as UMLS and PubMed \citep{Graphcare,KARE,gao2025leveraging}. While these approaches enrich the semantic coverage of MKGs, they typically restrict interactions to pairwise relations. As a result, they fail to capture co-occurring clinical concepts within a single visit and lack explicit mechanisms for modeling temporal evolution or preserving long-range clinical dependencies across a patient’s history. 


\section{Proof for Theorems}
\label{appendix: Proof of Theorems}
We present the proofs for robustness to pertubations and permutation invariance.
\subsection{Proof of Theorem~\ref{thm:perturb}}
\label{appendix: proof_pertub}
\begin{proof}
    Let the normalized hypergraph adjacency operator be defined as
\[
\mathbf{E}_{i,t}
=
\mathbf{Dv}_{i,t}^{-1/2}
\mathbf{I}_{i,t}
\mathbf{W}
\mathbf{De}_{i,t}^{-1}
\mathbf{I}_{i,t}^{\top},
\]
 where $\mathbf{I}_{i,t}$ denotes the incidence matrix, and $\mathbf{Dv}_{i,t}$ and $\mathbf{De}_{i,t}$ are the node and hyperedge degree matrices, respectively.   
    The spatial representations using \texttt{Hconv} can be represented interms of normalized hypergraph Laplacian as
    \begin{equation}
       \mathbf{Z}^{l+1}_{i,t} = \sigma\left((\mathbf{I} - \mathbf{L})\mathbf{Z}^{l}_{i,t}\mathbf{\Theta}^{(l+1)}\right), 
    \end{equation}
    where $\mathbf{L}$ is a normalized hypergraph Laplacian. Under the perturbation of hypergraph Laplacian, the error in the representations are bounded as
    \begin{equation}
        \begin{aligned}
        \|\hat{\mathbf{Z}}^{l+1}_{i,t}- \mathbf{Z}^{l+1}_{i,t}\|_{2} &{=} \|\sigma\left((I-\hat{\mathbf{L}})\mathbf{Z}^{(l)}_{i,t}\mathbf{\Theta}^{(l+1)}\right)-\sigma\left((I-\mathbf{L})\mathbf{Z}^{(l)}_{i,t}\mathbf{\Theta}^{(l+1)}\right)\|_{2} \\
           &\overset{(a)}{\leq} C ||\Delta\mathbf{L}\mathbf{Z}^{(l)}_{i,t}\mathbf{\Theta}^{(l+1)}\|_{2}\\
            &\overset{(b)}{\leq}K C \|\Delta\mathbf{L}\|_{2} \|\mathbf{Z}^{(l)}_{i,t}\|_{2} \\
            &\overset{(c)} {\leq} \epsilon KC \|\mathbf{Z}^{(l)}_{i,t}\|_{2}, \\
            &\overset{(d)} {\leq} \epsilon LKC \|\mathbf{X}_{i,t}\|_{2},
        \end{aligned}
        \label{eq:per_1}
    \end{equation}
  where \eqref{eq:per_1}(a) follows  since $\sigma(.)$ being Lipschitz continous and \eqref{eq:per_1}(b) follows from the bound on the learnable parameters $\Theta$. Whereas \eqref{eq:per_1}(c) follows from the bound on the energy of the pertubation matrix and \eqref{eq:per_1}(d) follows by recursively applying the above bound across $L$ \texttt{HConv} layers and
noting that $\mathbf{Z}^{(0)}_{i,t}=\mathbf{X}_{i,t}$.

 Recall $\tilde{\mathbf{x}}$ is obtained using visit node embedding and the error can be bounded interms of representations of \texttt{Hconv} layer as  
\begin{equation}
        \begin{aligned}
\|\tilde{\mathbf{x}}_{i,t,per}- \tilde{\mathbf{x}}_{i,t}\|_{2} &\overset{(a)}{=} \|\mathbf{e}^{T}_{v}\hat{\mathbf{Z}}_{i,t} \mathbf{B}^{T} - \mathbf{e}^{T}_{v}{\mathbf{Z}}_{i,t} \mathbf{B}^{T}\|_{2} \\
&\overset{(b)}{\leq} \epsilon LKC \|\mathbf{X}_{i,t}\|_{2} \gamma,
        \end{aligned}
        \label{eq:per_2}
    \end{equation}
where \eqref{eq:per_2}(a) follows by extracting the visit node embedding using canonical vector and \eqref{eq:per_2}(b) follows from \eqref{eq:per_1}(d) with $\|\mathbf{B}\|_{2}\leq\gamma$.

Finally the error between the ssm representations are bounded as 

\begin{equation}
    \begin{aligned}
        \|\hat{\mathbf{s}}_{i,k}-\mathbf{s}_{i,k}\|_{2} &\overset{(a)}= \|\tilde{\mathbf{x}}_{i,t,per}\rm{exp}(\Delta_{k}\mathbf{A}-\mathbf{I})\mathbf{A}^{-1}-\tilde{\mathbf{x}}_{i,t}\rm{exp}(\Delta_{k}\mathbf{A}-\mathbf{I})\mathbf{A}^{-1}\|_{2}\\
        &\overset{(b)}\leq \rm{const} \cdot \epsilon \|\mathbf{X}_{i,t}\|_{2}, 
    \end{aligned}
    \label{eq:per_3}
\end{equation}
where $\rm{const} = LKC\gamma\delta$ in \eqref{eq:per_3}(b), follows from   \eqref{eq:per_2}(a).  We emphasize that from \eqref{eq:per_3}(b),  error in temporal representations from \texttt{HoT-SSM} scales only linearly with the energy in the perturbation matrix demonstrating the model robustness to graph perturbations.  
\end{proof}
\subsection{Proof of Corollary ~\ref{sec:cor}} \label{sec:proof_cor}
We extend the analysis by explicitly unrolling the SSM dynamics. Recall that the difference between perturbed and unperturbed states satisfies:
\begin{equation} \begin{aligned} \|\hat{\mathbf{s}}_{i,1}-\mathbf{s}_{i,1}\|_{2} &\overset{(a)}= \|(\hat{\mathbf{s}}_{i,0}-\mathbf{s}_{i,0})\rm{exp}(\Delta_{1}\mathbf{A})\|_{2}+\|(\tilde{\mathbf{x}}_{i,1,per}-\tilde{\mathbf{x}}_{i,1})\rm{exp}(\Delta_{1}\mathbf{A}-\mathbf{I})\mathbf{A}^{-1}\|_{2}\\ &\overset{(b)}\leq \eta \epsilon L K C \delta \gamma \|\mathbf{X}_{i,1}\|_{2}, \end{aligned} \end{equation}
where (b) is realized by leveraging eqn~\ref{eq:per_3} and by  observing that initializations remain the same. At the second step, the error consists of (i) propagated error from the previous step and (ii) newly injected perturbation:
\begin{equation} \begin{aligned} \|\hat{\mathbf{s}}_{i,2}-\mathbf{s}_{i,2}\|_{2} &\overset{(a)}= \|(\hat{\mathbf{s}}_{i,1}-\mathbf{s}_{i,1})\rm{exp}(\Delta_{2}\mathbf{A})\|_{2}+\|(\tilde{\mathbf{x}}_{i,2,per}-\tilde{\mathbf{x}}_{i,2})\rm{exp}(\Delta_{2}\mathbf{A}-\mathbf{I}))\mathbf{A}^{-1}\|_{2}\\ &\overset{(b)}\leq \eta \epsilon L K C \delta \gamma \|\mathbf{X}_{i,1}\|_{2} + \epsilon L K C \delta \gamma \|\mathbf{X}_{i,2}\|_{2} \end{aligned}  \end{equation}
where (b) is realized through $\|\exp(\Delta_k \mathbf{A})\|_2 \le \eta$.

Similarly, for $k=3$ we have 
\begin{equation} \begin{aligned} \|\hat{\mathbf{s}}_{i,3}-\mathbf{s}_{i,3}\|_{2} &\leq \eta^{2} \epsilon L K C \delta \gamma \|\mathbf{X}_{i,1}\|_{2}+\eta \epsilon L K C \delta \gamma \|\mathbf{X}_{i,2}\|_{2} + \epsilon L K C \delta \gamma \|\mathbf{X}_{i,3}\|_{2} \end{aligned} \label{eq:pper_3} 
\nonumber
\end{equation}

Generalizing this for any $k=K$, we have
\begin{equation} \begin{aligned} \|\hat{\mathbf{s}}_{i,K}-\mathbf{s}_{i,K}\|_{2} &\leq \epsilon L K C \delta \gamma \sum_{\tau=1}^{K} \eta^{K-\tau}\|\mathbf{X}_{i,\tau}\|_{2} \\ &\overset{(b)}\leq  \rm{const}_{1}. \epsilon \end{aligned} \label{eq:pper_4} \end{equation}

Here $\rm{const}_{1} = L K C \delta \gamma \sum_{\tau=1}^{K} \eta^{K-\tau}\|\mathbf{X}_{i,\tau}\|_{2}$.
This expression explicitly captures the temporal residual propagation, where perturbations introduced at earlier time steps are attenuated (or propagated) through the factor $\eta^{K-\tau}$.

Importantly, the above result shows that the error remains linear in the perturbation magnitude $\epsilon$, and
the proportionality constant now explicitly captures both layerwise and temporal cascading effects.

\subsection{Perturbation Analysis Interms of the Spectrum}\label{sec:proof_spectrum}
To establish the inequality in terms of spectrum of hypergraph, we follow the  similar lines as done earlier. In particular, recall perturbed hypergraph Laplacian is given by $\hat{\mathbf{L}} = \mathbf{L} + \Delta\mathbf{L}$ with  $ \mathbf{L}$  and $\hat{\mathbf{L}}$ being normalized. Therefore, their respective eigenvalues $\lambda_{i}$ and $\hat{\lambda}_{i}$ satisfy
$
\lambda_i \in [0,2], \quad \hat{\lambda}_i \in [0,2].
$
Then the corresponding spectral decompositions be given by
$
\mathbf{L} = \mathbf{U}\mathbf{\Lambda}\mathbf{U}^\top, \quad
\hat{\mathbf{L}} = \hat{\mathbf{U}}\hat{\mathbf{\Lambda}}\hat{\mathbf{U}}^\top,
$
where $\mathbf{U}$ and $\hat{\mathbf{U}}$ are orthonormal matrices. Note that in general the eigenvectors are not preserved under perturbations.

The error between the representations are now expressed as 
\begin{equation}
        \begin{aligned}
        \|\hat{\mathbf{Z}}^{l+1}_{i,t}- \mathbf{Z}^{l+1}_{i,t}\|_{2} &{=} \|\sigma\left((I-\hat{\mathbf{L}})\mathbf{Z}^{(l)}_{i,t}\mathbf{\Theta}^{(l+1)}\right)- \\&  \sigma\left((I-\mathbf{L})\mathbf{Z}^{(l)}_{i,t}\mathbf{\Theta}^{(l+1)}\right)\|_{2} \\
           &\overset{(a)}{\leq} C ||(\mathbf{L}-\hat{\mathbf{L}})\mathbf{Z}^{(l)}_{i,t}\mathbf{\Theta}^{(l+1)}\|_{2}\\
            &\overset{(b)}{\leq}K C (\|\mathbf{L}\|_{2} + \|\hat{\mathbf{L}}\|_{2}\|)\|\mathbf{\Theta}^{(l+1)}\|_{2}\|\mathbf{Z}^{(l)}_{i,t}\|_{2} \\
            &\overset{(c)} {\leq}  LKC (\lambda_{\max}+ \hat{\lambda}_{\max})\|\mathbf{X}_{i,t}\|_{2},
        \end{aligned}
        \label{eq:pper_1}
    \end{equation}
where (a) follows from Lipschitz continuity, (b) follows from the triangle inequality and (c) follows since Laplacian is symmetric and their spectral norms equal their largest eigenvalues:
$
\|\mathbf{L}\|_2 = \lambda_{\max}, \quad
\|\hat{\mathbf{L}}\|_2 = \hat{\lambda}_{\max}.
$

Therefore, the errors in the representations are bounded by the maximum eigenvalues of true and perturbed graph hypergraph Laplacians that has  maximum eigenvalue as 2.

Further, if we impose structure on the perturbation-specifically modeling it as arising from hyperedge addition or deletion-the perturbed hypergraph Laplacian can be written in the form
$
\hat{\mathbf{L}} = \mathbf{L} + \rho \sum_{j} \mathbf{b}_{j} \mathbf{b}_{j}^{\top},
$
where $ \rho \in \{+1,-1\}$ corresponds to edge addition or deletion, respectively, and $\mathbf{b}_j \in \mathbb{R}^{N \times 1}$ is an incidence vector with entries as $+1$ at one end point node and $-1$ at other end point node, and zero elsewhere.  denotes the canonical basis vectors. Under such structured perturbations, existing results from graph signal processing \citep{ceci2020graph} show that the spectrum of the perturbed  Laplacian can be related to that of the original graph Laplacian. In particular, the largest eigenvalue $\hat{\lambda}_{\max}$ can be characterized in terms of $\lambda_{\max}$.

Therefore, while the proposed theorem is formulated in terms of the perturbation energy $( \|\Delta \mathbf{L}\|_2 )$, the imposed structure on the perturbation enables an equivalent interpretation in terms of the spectra of the normal and perturbed Laplacians. This provides additional insight into how structural changes in the hypergraph affect the learned representations.
\subsection{Proof of Theorem~\ref{thm:permutation_invariance}}
\label{appendix: proof_perm_invariance}
\begin{proof}
Recall the normalized hypergraph adjacency operator is defined as
\[
\mathbf{E}_{i,t}
=
\mathbf{Dv}_{i,t}^{-1/2}
\mathbf{I}_{i,t}
\mathbf{W}
\mathbf{De}_{i,t}^{-1}
\mathbf{I}_{i,t}^{\top},
\]
where $\mathbf{I}_{i,t}$ denotes the incidence matrix, and $\mathbf{Dv}_{i,t}$ and $\mathbf{De}_{i,t}$ are the node and hyperedge degree matrices, respectively.

Under a permutation of node labels induced by $\mathbf{P}$, the adjacency matrix modifies as
\[
\mathbf{E}_{i,t}^{\mathrm{perm}} = \mathbf{P}\mathbf{E}_{i,t}\mathbf{P}^{\top}.
\]
Similarly, the node feature matrix is permuted as
$\mathbf{X}_{i,t}^{\mathrm{perm}} = \mathbf{P}\mathbf{X}_{i,t}$.

The hypergraph convolution layer computes node representations as
\[
\mathbf{Z}_{i,t}^{(l+1)}
=
\sigma\left(
\mathbf{E}_{i,t}\mathbf{Z}_{i,t}^{(l)}\mathbf{\Theta}^{(l+1)}
\right).
\]
Under the permuted node labels and inputs, representations modifies as
\begin{align}
\mathbf{Z}_{i,t,\mathrm{perm}}^{(l+1)}
&\overset{(a)}{=}
\sigma\left(
\mathbf{E}_{i,t}^{\mathrm{perm}}
\mathbf{Z}_{i,t,\mathrm{perm}}^{(l)}
\mathbf{\Theta}^{(l+1)}
\right) \nonumber \\
&\overset{(b)}{=}
\sigma\left(
\mathbf{P}\mathbf{E}_{i,t}\mathbf{P}^{\top}
\mathbf{P}\mathbf{Z}_{i,t}^{(l)}
\mathbf{\Theta}^{(l+1)}
\right) \nonumber \\
&\overset{(c)}{=}
\mathbf{P}
\sigma\left(
\mathbf{E}_{i,t}\mathbf{Z}_{i,t}^{(l)}
\mathbf{\Theta}^{(l+1)}
\right) \nonumber \\
&\overset{(d)}=
\mathbf{P}\mathbf{Z}_{i,t}^{(l+1)},
\label{eq:permutation_equivariance}
\end{align}
where the \eqref{eq:permutation_equivariance}(c) follows from the orthogonality of permutation matrices and the fact that $\sigma(\cdot)$ is applied row-wise. \eqref{eq:permutation_equivariance}(d) establishes that representations from \texttt{HConv} are permutation equivariant.

Recall, the input to the dynamic state-space model is the embedding of the visit node, extracted as
\[
\mathbf{z}^{v}_{i,t} = \mathbf{e}_{v}^{\top}\mathbf{Z}_{i,t},
\]
where $\mathbf{e}_{v} \in \mathbb{R}^{|\mathcal{V}_{i,t}|}$ is the canonical basis vector corresponding to the visit node.
Under permutation, the node representations are reordered as in~\eqref{eq:permutation_equivariance}, yielding
\begin{align}
\mathbf{z}^{v}_{i,t,\rm{perm}}&=
\mathbf{e}_{v}^{\top}\mathbf{P}^{T}\mathbf{P}\mathbf{Z}_{i,t}. \nonumber \\
&\overset{(a)}{=}\mathbf{e}_{v}^{\top}\mathbf{Z}_{i,t}, \nonumber\\
&=\mathbf{z}^{v}_{i,t}
\end{align}
where (a) follows from the property of  permutation matrix. Since the proposed model operates solely on the visit-level representations  although \texttt{Hconv} introduces the permutation equivariance at the node level, visit node selection induces invariance hence the output from \texttt{Hot-SSM } are invariant to permutations. 
\end{proof}

\section{Implementation Details}
\label{appendix: implementation_details}
\subsection{Hyperknowledge Graph Construction}
\label{appendix: hyperknowledge_graph}

\begin{figure*}[t]
    \includegraphics[width=\textwidth]{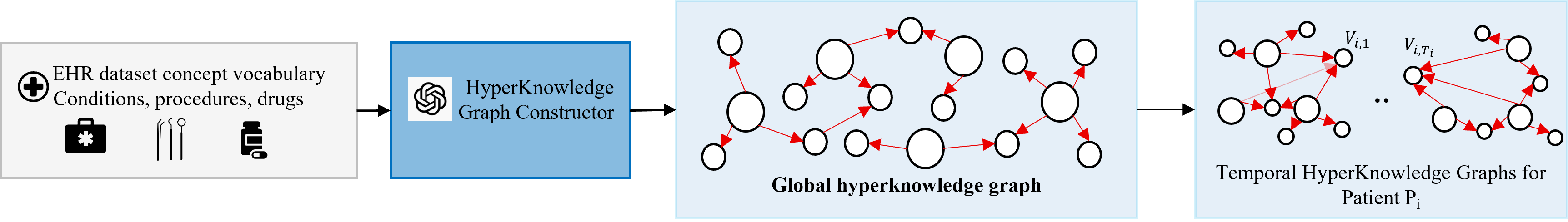}
    
    \caption{Our pipeline to construct hyperknowledge Graphs}
    \label{fig:hyperknowledge_graph_construction}
   
\end{figure*}
The construction of the hyperknowledge graph is performed in two sequential stages, as illustrated in Figure~\ref{fig:hyperknowledge_graph_construction}.

\textbf{Global HyperKnowledge Graph Construction}:
To ensure semantic consistency across the EHR dataset, we create global hyperknowledge graph in two phases inspired by TnT-LLM architecture ~\citep{wan2024tnt}. 
\begin{itemize}
    
    \item Taxonomy  (Phase 1): We first induce a global medical taxonomy by processing a representative subset of the corpus. This stage corresponds to the taxonomy generation phase in TnT-LLM, where a global clinical taxonomy is constructed. Using a diverse sample, we prompt the LLM to identify high-level clinical concepts that serve as global hyperedges or buckets (e.g., Heart Failure). These hyperedges constitute the foundation of the global hyperknowledge graph. The prompt used for clinical bucket generation is shown in Figure~\ref{fig:taxonomy_prompt}, and example hyperedges are presented in Table~\ref{tab:hyperedge_examples}.

    \item Guided Batch  (Phase 2): In this phase, we scale taxonomy assignment across full dataset while maintaining semantic consistency. Medical entities are processed in batches, where LLM is conditioned on cumulative set of existing hyperedges discovered in Phase 1. The LLM is instructed to reuse existing hyperedges whenever a semantic match exists and is allowed to introduce new hyperedges only when an entitiy represents fundamentally distinct clinical concept. Furthermore, recognizing the nature of clinical concepts, our prompting supports multi-label assignment (polysemy), this allows a single entity to map to multiple relevant hyperedges (e.g., an entity "Topical products for joint and muscular pain" is mapped to both "Bone and Joint Disease" and "Pain Management"). The prompt for this phase is shown in Figure ~\ref{fig:mapping_prompt} and examples are shown in Table~\ref{tab:entity_mapping_examples}.
\end{itemize}
\begin{table}[t]
\centering
\caption{Examples of global hyperedges constructed during the taxonomy discovery phase.}
\label{tab:hyperedge_examples}

\begin{tabularx}{\textwidth}{@{}X@{\hspace{1.5em}}X@{}}
\toprule
\multicolumn{2}{c}{\textbf{Example Global Hyperedges}} \\
\midrule

\textbf{Gastrointestinal Disease} &
\textbf{Heart Failure} \\

\cmidrule(lr){1-1}\cmidrule(lr){2-2}

\textit{Example entities:} &
\textit{Example entities:} \\

\begin{minipage}[t]{\linewidth}
\begin{itemize}[leftmargin=*, nosep, topsep=2pt]
    \item Agents for treatment of hemorhoids and anal fissures (topical)
    \item Antacids
    \item Antidiarrheal microorganisms
    \item Gastritis and duodenitis
    \item Nausea and vomiting
    \item Gastrointestinal hemorrhage
\end{itemize}
\end{minipage}
&
\begin{minipage}[t]{\linewidth}
\begin{itemize}[leftmargin=*, nosep, topsep=2pt]
    \item Congestive heart failure; nonhypertensive
    \item Diagnostic ultrasound of heart (echocardiogram)
    \item Swan-ganz catheterization for monitoring
    \item Beta blocking agents
    \item Cardiac glycosides
    \item Potassium-sparing diuretics
\end{itemize}
\end{minipage}
\\
\bottomrule
\end{tabularx}
\end{table}

\begin{figure*}[!t]
\centering
\begin{tcolorbox}[colback=white, colframe=black, width=\textwidth]

\textbf{Role:} You are a Senior Medical Ontologist and Knowledge Graph Architect working with ICU data. You have a dataset of \textit{[Total Entities]} ICU medical entities (Conditions, Procedures, Drugs).\\

\textbf{Your Task:} Analyze the diverse random sample below and define a comprehensive set of \textbf{Medical Fact Buckets} that:
\begin{enumerate}
    \item Cover these sample items
    \item Would likely cover other similar ICU medical concepts in the full dataset
    \item Are broad enough to group many entities but distinct enough to be meaningful
\end{enumerate}

\textbf{Core Principle:}
\begin{itemize}
    \item A bucket = a medical fact
    \item Everything clinically related to that fact belongs in it, whether it is a condition, a procedure, or a drug
\end{itemize}

\textbf{Instructions:}
\begin{enumerate}
    \item The bucket name must be the medical fact itself
    \item Create a reasonable number based on the clinical diversity you observe in the sample:
    \begin{itemize}
        \item Not so few that unrelated concepts get merged together
        \item Not so many that you recreate a flat list of individual entities
    \end{itemize}
    \item One entity can belong to \textbf{multiple buckets} if clinically relevant
    \begin{itemize}
        \item Example: ``metoprolol'' in [``Hypertension'', ``Heart Failure'', ``Atrial Fibrillation'']
    \end{itemize}
    \item Use exact entity names
    \item A bucket should focus on clinical relationships (what treats what, what manages what)
\end{enumerate}

\textbf{Sample Entities (\textit{[Batch Size]} items):}
\begin{itemize}[leftmargin=*, noitemsep, topsep=0pt]
    \item \textbf{Conditions:} \textit{[Comma-separated list of conditions...]}
    \item \textbf{Procedures:} \textit{[Comma-separated list of procedures...]}
    \item \textbf{Drugs:} \textit{[Comma-separated list of drugs...]}
\end{itemize}

\textbf{Output STRICT JSON format:}
\begin{lstlisting}
{
  "suggested_buckets": [
    "Heart Failure"
  ]
}
\end{lstlisting}

\textbf{CRITICAL:} Return ONLY the JSON object. Bucket names must be concise medical facts/concepts.

\end{tcolorbox}

\caption{The prompt used for Taxonomy Discovery (Phase I), where the LLM induces clinically grounded medical fact buckets from a representative scout batch of ICU entities.}
\label{fig:taxonomy_prompt}

\end{figure*}


\begin{figure*}[!t]
\centering
\begin{tcolorbox}[colback=white, colframe=black, width=\textwidth]

\textbf{Role:} You are a medical knowledge graph expert.

\textbf{Task:} Map the following \textit{[Batch Size]} entities to one or more \textbf{Medical Concept Buckets}.

\textbf{Core Principle:}
\begin{itemize}
    \item A bucket = a medical fact/concept
    \item Conditions, procedures, and drugs all belong in the same bucket if they are clinically associated with that fact
\end{itemize}

\textbf{Consistency Constraint:}
\begin{quote}
    \textit{[Insert consistency instruction / existing buckets here]}
\end{quote}

\textbf{Rules:}
\begin{enumerate}[leftmargin=*, noitemsep, topsep=0pt]
    \item \textbf{REUSE:} If an entity fits into one of the EXISTING buckets above, you MUST use that exact name.
    \item \textbf{CREATE:} Only create a NEW bucket name if the entity represents a distinctly new clinical concept not covered above.
    \item \textbf{Polysemy:} An entity can belong to MULTIPLE buckets if it has distinct clinical uses.
    \begin{itemize}
        \item Example: ``beta blocking agents'' $\rightarrow$ [``Management of Hypertension'', ``Management of Heart Failure'']
    \end{itemize}
    \item Bucket names must be concise medical concept nouns.
    \item \textbf{Exhaustiveness:} You MUST output a key-value pair for EVERY SINGLE entity in the input list below.
\end{enumerate}

\textbf{Input Entities:}
\begin{lstlisting}
[
  "Aspirin",
  "Metoprolol",
  "Chest X-Ray",
  ... (list of entities in current batch)
]
\end{lstlisting}

\textbf{Output STRICT JSON format:}
\begin{lstlisting}[breaklines=true]
{
  "entity_mappings": {
    "exact_entity_name_from_input": ["Bucket Name 1", "Bucket Name 2"],
    "another_entity_name": ["Bucket Name 3"]
  }
}
\end{lstlisting}

\textbf{Important:}
\begin{itemize}
    \item The keys of the JSON object MUST be the exact strings from the input list - no typo fixes, no case changes.
    \item You MUST output a key-value pair for EVERY single entity in the input list.
    \item Count the input entities (\textit{[Batch Size]}) and ensure the output JSON has the same number of keys.
    \item Values must be arrays/lists of bucket names, even if only one bucket.
    \item A bucket should focus on clinical relationships (what treats what, what manages what)
\end{itemize}

\textbf{CRITICAL:} Return ONLY the JSON object.

\end{tcolorbox}

\caption{The updated prompt used for Phase II (Guided Batch Mapping), enforcing reuse of existing buckets and clinically grounded concept mapping.}
\label{fig:mapping_prompt}

\end{figure*}



\begin{table*}[b]
\centering
\caption{Examples of multi-label entity-to-hyperedge mappings from the guided batch assignment phase.}
\label{tab:entity_mapping_examples}

\begin{tabularx}{\textwidth}{
    >{\raggedright\arraybackslash}p{0.42\textwidth}
    >{\raggedright\arraybackslash}X
}
\toprule
\textbf{Medical Entity} & \textbf{Mapped Global Hyperedges} \\
\midrule

\begin{minipage}[c]{\linewidth}
Topical products for joint and muscular pain
\end{minipage}
&
\begin{minipage}[c]{\linewidth}
\begin{itemize}[leftmargin=*, noitemsep, topsep=0pt]
    \item Bone and Joint Disease
    \item Pain Management
\end{itemize}
\end{minipage}
\\

\midrule

\begin{minipage}[c]{\linewidth}
Selective calcium channel blockers with direct cardiac effects
\end{minipage}
&
\begin{minipage}[c]{\linewidth}
\begin{itemize}[leftmargin=*, noitemsep, topsep=0pt]
    \item Hypertension
    \item Cardiac Arrhythmias
    \item Coronary Artery Disease
\end{itemize}
\end{minipage}
\\

\bottomrule
\end{tabularx}
\end{table*}

\subsubsection{Knowledge Infused Temporal Hypergraph Construction}

Given a patient visit, we construct knowledge infused temporal hypergraphs by grounding the global hyperknowledge graph to the medical concepts observed in that visit.
We first identify all medical concepts present in the visit and select only those hyperedges from the global hyperknowledge graph that contain at least one of these concepts.
For each selected hyperedge, we retain only the concepts that actually appear in the visit.

In addition, we introduce a visit node and connect it to all selected hyperedges to explicitly model visit-level context.
The resulting visit-level hypergraph is defined as
\begin{equation}
\mathcal{H}_{i,t} = (\mathcal{V}_{i,t}, \mathcal{E}_{i,t}),
\end{equation}
where $\mathcal{V}_{i,t}$ denotes the set of unique clinical entities appearing in the visit $\mathcal{C}_{i,t}$, together with the visit node, and
\begin{equation}
\mathcal{E}_{i,t} = \{ e_{i,t}^{(1)}, e_{i,t}^{(2)}, \ldots, e_{i,t}^{(K_{i,t})} \}
\end{equation}
represents the set of hyperedges associated with the visit.
A single visit may be connected to one or multiple hyperedges, depending on the number of clinically related concept groups. Table \ref{tab:hypergraph_stats} represents statistics for temporal hyperknowledge graphs.

\begin{table}[htbp]
\centering
\caption{Statistics for MIMIC-III and MIMIC-IV Hypergraph Representations}
\label{tab:hypergraph_stats}
\resizebox{\columnwidth}{!}{%
\begin{tabular}{llrrrrrr}
\toprule
\textbf{Dataset} & \textbf{Task} & \textbf{Patients} & \textbf{Visits} & \textbf{Total} & \multicolumn{3}{c}{\textbf{Hyperedges per Visit}} \\
\cmidrule(lr){6-8}
 &  &  &  & \textbf{Hyperedges} & \textbf{Mean (SD)} & \textbf{Min} & \textbf{Max} \\
\midrule
\multirow{4}{*}{\textbf{MIMIC-III}} 
 & Mortality Prediction & 6,186 & 9,717 & 263736 & 27.14 (6.53) & 4 & 33 \\
 & Readmission Prediction & 6,186 & 9,717 & 263736 & 27.14 (6.53) & 4 & 33 \\
 & Length of Stay Prediction & 35,707 & 44,399 & 1,383,419 & 31.16 (11.07) & 3 & 69 \\
 & Drug Recommendation & 35,707 & 44,399 & 1,204,642 & 27.13 (9.17) & 3 & 57 \\
\midrule
\multirow{4}{*}{\textbf{MIMIC-IV}} 
 & Mortality Prediction & 59,262 & 132,275 & 3,277,198 & 24.78 (8.21) & 3 & 54 \\
 & Readmission Prediction & 59,262 & 132,275 & 2,698,999 & 20.40 (6.67) & 2 & 55 \\
 & Length of Stay Prediction & 123,478 & 232,247 & 6,711,737 & 28.90 (12.03) & 2 & 93 \\
 & Drug Recommendation & 46,184 & 154,953 & 3,281,934 & 21.18 (6.37) & 2 & 44 \\
\bottomrule
\end{tabular}
}
\end{table}

\subsection{Attention based Hyperknowledge Graph Convolution}
\label{appendix: attention_based_hykg}
The attention based approach assigns learnable importance weights to clinical concepts and hyperedges, enabling explicit identification of the factors that contribute to the model’s prediction.
By learning these weights, the model can emphasize clinically relevant concepts and relations, thereby improving interpretability.

Let $v_r \in \mathcal{V}_{i,t}$ denote a clinical concept node r associated with patient $i$ at visit $t$.
The attention based convolution is applied at each network layer and operates in two stages:
(1) aggregation from nodes to hyperedges, followed by
(2) aggregation from hyperedges to nodes.

\textbf{Node to hyperedge aggregation.}
For a hyperedge $e^j \in \mathcal{E}_{i,t}$, its representation at layer $l+1$ is computed by attending over the nodes incident to it:
\begin{equation}
\hat{\mathbf{e}}^{(l+1)}_{j}
= \sigma\left(
\sum_{v_r \in e^j}
\alpha_{j,r}
\mathbf{W}^{(1)} \mathbf{z}^{(l)}_{r}
\right),
\end{equation}
where $\mathbf{z}^{(l)}_{r} \in \mathbb{R}^{d \times 1}$ denotes the representation of node $v_r$ at layer $l$,
$\mathbf{W}^{(1)} \in \mathbb{R}^{d \times d}$ is a learnable projection matrix, and $\sigma(\cdot)$ denotes a non linear activation function.

The node to hyperedge attention coefficient $\alpha_{j,r}$ reflects the importance of node $v_r$ within hyperedge $e^j$ and is defined as
\begin{equation}
\alpha_{j,r}
=
\frac{
\exp\left(\mathbf{m}^{\top} \tilde{\mathbf{z}}_{r}\right)
}{
\sum_{v_q \in e^j}
\exp\left(\mathbf{m}^{\top} \tilde{\mathbf{z}}_{q}\right)
},
\end{equation}
where $\mathbf{m} \in \mathbb{R}^{d}$ is a learnable attention vector and
\begin{equation}
\tilde{\mathbf{z}}_{r}
= \text{LeakyReLU}\left(\mathbf{W}^{(1)} \mathbf{z}^{(l)}_{r}\right).
\end{equation}

\textbf{Hyperedge to node aggregation.}
The updated hyperedge representations are then propagated back to nodes through an attention mechanism.
The representation of node $v_r$ at layer $l+1$ is computed as
\begin{equation}
\mathbf{z}^{(l+1)}_{r}
= \sigma\left(
\sum_{e^j \in \mathcal{E}_{i,t}(v^r)}
\beta_{r,j}
\mathbf{W}^{(2)} \hat{\mathbf{e}}^{(l+1)}_{j}
\right),
\nonumber
\end{equation}
where $\mathcal{E}_{i,t}(v^r)$ denotes the set of hyperedges incident to node $v^r$, $e^{j}$ denotes $j^{th}$ hyperedge and $\mathbf{W}^{(2)} \in \mathbb{R}^{d \times d}$ is a learnable transformation.  $\hat{\mathbf{e}}^{(l+1)}_{j}$ is the updated representation of hyperedge $e^j$.

The hyperedge to node attention coefficient $\beta_{r,j}$ quantifies the influence of hyperedge $e^j$ on node $v_r$ and is defined as
\begin{equation}
\beta_{r,j}
= \frac{
\exp\left(\mathbf{n}^{\top} \tilde{\mathbf{h}}_{i,j}\right)
}{
\sum_{e^q \in \mathcal{E}_{i,t}(v^r)}
\exp\left(\mathbf{n}^{\top} \tilde{\mathbf{h}}_{q,j}\right)
},
\end{equation}
where $\mathbf{n} \in \mathbb{R}^{d}$ is a learnable attention vector and
\begin{equation}
\tilde{\mathbf{h}}_{i,j}
=
\text{LeakyReLU}\left(
\mathbf{W}^{(2)} \hat{\mathbf{e}}^{(l+1)}_{j}
\;\Vert\;
\mathbf{W}^{(1)} \mathbf{z}^{(l)}_{i}
\right).
\end{equation}

These learned attention weights $\beta_{r,j}$ are used in generating reasoning chains.

\subsection{Learnable Parameters}
Complementing the parameter efficiency analysis in Figure~\ref{fig:pred_size}, here we present the  learnable parameters in \texttt{HoT-SSM} framework.  These only includes input projection matrices for dimensionality alignment and the weight matrices inherent to \texttt{HConv} layers. Then in the SSM module the only learnable parameters are the continuous input matrix $\mathbf{B}$ and the output matrix $\mathbf{C}$. Since the state transition matrix $\mathbf{A}$ is HiPPO initialized (not learnable). Finally decoder involves learnable  projection layer.
\subsection{Computation Overhead of Generating Hyperedges}
Recall, the proposed method avoids naive per-patient hypergraph construction by employing a two-phase design optimized for scalability and reduced cost. Specifically, a global hyperknowledge graph is constructed once as an offline preprocessing step. In particular, global hyperknowledge graph construction for MIMIC-III, incurs a runtime as 178 sec and token cost as 100k (total tokens) 
$\times$ per token cost, while the per-patient hypergraph construction incurs runtime as 0.2 milliseconds. Importantly, this global construction is not repeated during training or inference, making it efficient in practice. Furthermore, when deploying in a new setting with expanded vocabulary, the system only processes the newly observed terms against the existing taxonomy, ensuring that additional token and computational costs remain controlled.

\section{Dataset and Training Details}
\label{appendix: dataset_traning_details}
\begin{table}[t]
\centering
\caption{Statistics of EHR datasets. ``\#'': ``the number of'', ``/patient'': ``per patient''.}
\label{tab:dataset_statistics}
\resizebox{\columnwidth}{!}{
\begin{tabular}{lcccccc}
\hline
Dataset & \#patients & \#visits & \#visits/patient & \#conditions/patient & \#procedures/patient & \#drugs/patient \\
\hline
MIMIC-III & 35,707 & 44,399 & 1.24 & 12.89 & 4.54 & 33.71 \\
MIMIC-IV  & 123,488 & 232,263 & 1.88 & 21.74 & 4.70 & 43.89 \\
\hline
\end{tabular}
}
\end{table}

\subsection{Dataset Description}
We use publicly available MIMIC-III \citep{mimic3} and MIMIC-IV \citep{mimic4} datasets. Table \ref{tab:dataset_statistics} summarizes the key statistics of datasets used in our experiments.
\subsection{Task Description}
\label{app:prediction_tasks}

We evaluate our model on four tasks using the electronic health record (EHR) data. Each task is formulated based on a patient’s visit sequence. 

\begin{itemize}
    \item \textbf{Mortality Prediction.}
    This binary classification task  predicts patient survival status during the next visit.
    The model uses a patient’s previous visit records to output a binary prediction indicating survival status.
    During dataset preparation the final visit of each patient is excluded from training since it does not have a future outcome.

    \item \textbf{Readmission Prediction.}
    This binary classification task determines whether a patient will be readmitted to the hospital within 15 days after discharge. The model predicts a binary label based on the time interval between consecutive hospital visits

    \item \textbf{Length of Stay Prediction.}
    This task estimates the duration of a patient’s intensive care unit (ICU) stay for a given hospital visit.
    It is formulated as a multi-class classification problem with 10 discrete categories. The target label is represented as a one-hot vector indicating the corresponding class among 10 categories, which represent ICU stays of less than one day (class 0), within one week (classes 1-7), between one and two weeks (class 8), and longer than two weeks (class 9).

    \item \textbf{Drug Recommendation.}
    This task predicts the set of medications prescribed to a patient during a hospital visit.
    The model outputs a set of drugs selected from a predefined medication vocabulary.
    Since multiple drugs may be prescribed simultaneously, this task is treated as a multi-label classification problem.
\end{itemize}

\textbf{Dataset Preprocessing}: Both MIMIC-III and MIMIC-IV datasets are split into train, validation and test splits with 80\%/10\%/10\% ratio. For dataset preprocessing we followed the similar setup as in \citep{Graphcare,KARE}.

\textbf{Baselines}: We compare the proposed model against representation learning approaches. In particular we compare against GRU \citep{chung2014empirical}, Transformer \citep{vaswani2017attention},
RETAIN \citep{choi2016retain}, GRAM \citep{choi2017gram}, Deepr \citep{deepr}, StageNet \citep{stagenet}, AdaCare \citep{ma2020concare}, GRASP \citep{zhang2021grasp} and GraphCare \citep{Graphcare}.

\textbf{Evaluation Metrics:} Model performance is evaluated using a comprehensive set of metrics designed to capture distinct aspects of predictive reliability. \textbf{AUROC} (Area Under the Receiver Operating Characteristic Curve), \textbf{AUPRC} (Area Under the Precision-Recall Curve), \textbf{F1 Score}, \textbf{Accuracy}, \textbf{Jaccard}, and \textbf{Cohen's Kappa}.

We use binary cross-entropy loss for mortality, readmission. Cross-entropy loss for drug recommendation and length of stay prediction.

\subsection{Training Details}
\label{appendix: training_details}

We preprocess the EHR datasets following the same methodology as in GraphCare \citep{Graphcare}. 
 The dataset is partitioned into  80\%/10\%/10\% for training/validation/testing data and we use Adam as the optimizer, patience as 85, runs as 10 on MIMIC-III and 5 on MIMIC IV. The hyperparameters used for training are summarized in Table \ref{tab:hotssm_hparams}, Table \ref{tab:hotssm_hparams_v2}.

\begin{table}[htbp]
\centering
\caption{Hyperparameter configuration for \texttt{HoT-SSM (v1)} across tasks on MIMIC-III  and MIMIC-IV. Mort: Mortality. Red: Readmission. Drug: Drug Recommendation}
\label{tab:hotssm_hparams}
\begin{tabular}{lcccccccc}
\toprule
\multirow{2}{*}{\textbf{Hyperparameter}} & \multicolumn{4}{c}{\textbf{MIMIC-III}} & \multicolumn{4}{c}{\textbf{MIMIC-IV}} \\
\cmidrule(lr){2-5} \cmidrule(lr){6-9}
& \textbf{Drug} & \textbf{LOS} & \textbf{Mort.} & \textbf{Read.} & \textbf{Drug} & \textbf{LOS} & \textbf{Mort.} & \textbf{Read.} \\
\midrule
Step size $\Delta$          & 0.001 & 0.001 & 0.001 & 0.001 & 0.001 & 0.001 & 0.001 & 0.001 \\
Learning rate ($lr$)          & 0.001 & 0.001 & 0.001 & 0.001 & 0.001 & 0.001 & 0.001 & 0.001 \\
Hidden dimension              & 384   & 256   & 128   & 128   & 384   & 128   & 128   & 128 \\
Convolution layers            & 3     & 3     & 2     & 2     & 2     & 2     & 2     & 2 \\
SSM state dimension           & 384    & 128   & 128   & 128   & 64    & 128   & 128   & 128 \\
Batch size                    & 128   & 128   & 128   & 128   & 64   & 128   & 128   & 128 \\
Dropout                       & 0.1   & 0.0   & 0.2   & 0.1   & 0.2   & 0.1   & 0.3   & 0.3 \\
\bottomrule
\end{tabular}
\end{table}

\begin{table}[htbp]
\centering
\caption{Hyperparameter configuration for \texttt{HoT-SSM (v2)} across tasks on MIMIC-III  and MIMIC-IV.}
\label{tab:hotssm_hparams_v2}
\begin{tabular}{lcccccccc}
\toprule
\multirow{2}{*}{\textbf{Hyperparameter}} & \multicolumn{4}{c}{\textbf{MIMIC-III}} & \multicolumn{4}{c}{\textbf{MIMIC-IV}} \\
\cmidrule(lr){2-5} \cmidrule(lr){6-9}
& \textbf{Drug} & \textbf{LOS} & \textbf{Mort.} & \textbf{Read.} & \textbf{Drug} & \textbf{LOS} & \textbf{Mort.} & \textbf{Read.} \\
\midrule
Stepsize ($\Delta$)         & 0.001 & 0.001 & 0.001   & 0.001 & 0.001 & 0.001 & 0.001 & 0.001 \\
Learning rate ($lr$)          & 0.001 & 0.001 & 0.001 & 0.001 & 0.001 & 0.001 & 0.001 & 0.001 \\
Hidden dimension              & 384   & 256   & 128   & 128   & 256   & 128   & 256   & 128 \\
Convolution layers            & 2     & 2     & 2     & 2     & 2     & 2     & 2     & 2 \\
SSM state dimension           & 384   & 128   & 128   & 128   & 128   & 128   & 128   & 128 \\
Batch size                    & 64    & 128    & 128   & 128   & 64    & 128   & 128   & 128 \\
Attention heads               & 2    & 2    & 2   & 1   & 2    & 2    & 2    & 1 \\
Dropout                       & 0.1   & 0.1   & 0.1   & 0.1   & 0.2   & 0.3   & 0.3   & 0.1 \\
\bottomrule
\end{tabular}
\end{table}

\subsection{System Configuration}
All experiments were conducted on a server equipped with two 64-core Intel Xeon Platinum 8562Y+ CPUs with 512GB memory. The platform runs on Ubuntu 22.04.5 LTS with GCC version 10.5.0. We used CUDA 11.8, Pytorch version
2.1.2 and Pytorch-geometric 2.7.0 for all the experiments. All experiments were performed on a single NVIDIA A40 GPU with 44GB of VRAM.

\section{Additional Experiments}
\label{appendix: additional_experiments}

\subsection{Ablation based on source of hyperknowledge creation}\label{sec:static_hypergraphs}

To evaluate the robustness of our framework to the source of domain knowledge, we include additional experiments where hyperedges are constructed without any LLM involvement. Specifically, we consider (i) ontology-based grouping using standard medical ontologies. The individual medical entities are mapped to broad, standardized category labels (e.g., "Circulatory System Diseases") using the CCS and ATC ontologies to build global hyperedges. Then, for each patient visit, a specific hypergraph is constructed by activating only the predefined hyperedges corresponding to the entities present in that patient's visits, formally linking the concepts and a central visit node to these hyperedges via an incidence matrix and  
(ii) co-occurrence-based grouping, where concept co-occurrence frequencies are first computed across all patient visits and normalized using Positive Pointwise Mutual Information (PPMI). The resulting concept relationships are then clustered into $50$ groups for drug recommendation task and $100$ groups for all other task, each forming a hyperedge, creating global hyperknowledge graph. For each patient visit, a specific hypergraph is subsequently instantiated by activating only the hyperedges corresponding to concepts present in that record.
As shown in Table~\ref{tab:llm_ablation_mortality_readmission}, Table~\ref{tab:llm_ablation_los_drug}, HoT-SSM with these non-LLM constructions achieves performance that is comparable to HoT-SSM where hyperedges are built using LLMs. Further, we emphasize that performance of this approach is still significantly better than, prior state-of-the-art methods. While LLM-based hyperedges provide additional semantic refinement, the \emph{core gains persist even with static or data-driven constructions, indicating that the improvements primarily stem from modeling higher-order relations via hypergraphs and long-range temporal dependencies via SSM}. Overall, these results demonstrate that HoT-SSM is robust to the choice of hyperedge construction and remains effective even in resource-constrained settings without LLMs.


\begin{table}[t]
\centering
\caption{Comparison of hyperedge construction strategies on MIMIC-III for mortality and readmission prediction task.}
\resizebox{\columnwidth}{!}{
\begin{tabular}{llcccc}
\hline
\textbf{Category} & \textbf{Method} 
& \multicolumn{2}{c}{\textbf{Mortality}} 
& \multicolumn{2}{c}{\textbf{Readmission}} \\
 &  & \textbf{AUPRC} & \textbf{AUROC} 
 & \textbf{AUPRC} & \textbf{AUROC} \\
\hline
\multirow{2}{*}{Without LLM} 
& Unsupervised clustering 
& $31.33 \pm 0.5$ & $71.71 \pm 0.8$ 
& $65.00 \pm 0.4$ & $62.51 \pm 0.3$ \\
& Medical ontologies 
& $31.98 \pm 1.6$ & $71.84 \pm 0.7$ 
& $64.45 \pm 0.5$ & $62.25 \pm 0.6$ \\
\hline
\rowcolor{red!15}
With LLM
& HoT-SSM 
& $34.40 \pm 1.0$ & $74.27 \pm 0.6$ 
& $67.45 \pm 0.9$ & $64.59 \pm 0.6$ \\
\hline
\end{tabular}
}
\label{tab:llm_ablation_mortality_readmission}
\end{table}

\begin{table}[t]
\centering
\caption{Comparison of hyperedge construction strategies on MIMIC-III for LOS and Drug Recommendation task.}
\resizebox{\columnwidth}{!}{
\begin{tabular}{llcccc}
\hline
\textbf{Category} & \textbf{Method} 
& \multicolumn{2}{c}{\textbf{LOS}} 
& \multicolumn{2}{c}{\textbf{Drug}} \\
 &  & \textbf{Kappa} & \textbf{F1} 
 & \textbf{Jaccard} & \textbf{F1} \\
\hline
\multirow{2}{*}{Without LLM} 
& Unsupervised clustering 
& $27.54 \pm 0.3$ & $35.20 \pm 0.4$ 
& $40.71 \pm 1.3$ & $55.17 \pm 1.5$  \\
& Medical ontologies 
& $27.12 \pm 0.1$ & $35.52 \pm 0.2$ 
& $47.12 \pm 0.7$ & $61.34 \pm 0.2$  \\
\hline
\rowcolor{red!15}
With LLM
& HoT-SSM 
& $30.20 \pm 0.8$ & $37.44 \pm 0.8$ 
& $52.67 \pm 0.5$  & $66.98 \pm 0.5$ \\
\hline
\end{tabular}
}
\label{tab:llm_ablation_los_drug}
\end{table}

\subsubsection{Calibration metrics on  mortality prediction}

To evaluate the robustness of performance gains, we report calibration metrics for mortality prediction task on MIMIC-III in Table~\ref{tab:calibration_metrics}. It can be observed that \texttt{HoT-SSM} achieves lower ECE and Brier score, along with higher specificity and sensitivity, indicating more reliable and well-calibrated predictions. We emphasize compared to prior state-of-the-art methods, \texttt{HoT-SSM} demonstrates \emph{improved robustness, further supporting our claims}.
\begin{table}[h]
\centering
\caption{Calibration metrics}
\begin{tabular}{lcccc}
\hline
\textbf{Method} & \textbf{ECE} & \textbf{Brier Score} & \textbf{Sensitivity} & \textbf{Specificity} \\
\hline

\texttt{GraphCare} 
& -- 
& -- 
& $0.17$ 
& $0.97$ \\
\texttt{KARE} 
& -- 
& -- 
& $0.14$ 
& $0.94$ \\
\rowcolor{red!15}
\texttt{HoT-SSM} 
& $0.03$ 
& $0.10$ 
& $0.18$ 
& $0.98$ \\
\hline
\end{tabular}
\label{tab:calibration_metrics}
\end{table}

\subsection{Comparison with KARE}
As shown in Table \ref{tab:comparision_with_kare}, HoT-SSM consistently outperforms KARE in both zero-shot and few-shot settings for mortality and readmission prediction on MIMIC-III. While the fine-tuned variant of KARE achieves the highest Macro-F1, it depends on task-specific LLM fine-tuning and careful hyperparameter optimization. More importantly, fine tuned variant incurs significant computational and token costs. In contrast, HoT-SSM attains strong performance without any LLM fine-tuning, highlighting its effectiveness on EHR prediction tasks.
\begin{table}[htbp]
\centering
\caption{Macro F1 (\%) comparison of  \texttt{HoT-SSM} and \texttt{KARE} on MIMIC-III for mortality and readmission prediction.}
\label{tab:comparision_with_kare}
\begin{tabular}{lcc}
\toprule
\textbf{Model} & \textbf{Mortality} & \textbf{Readmission} \\
\midrule
\texttt{KARE} (Zero-shot)      & 54.6 & 56.3 \\
\texttt{KARE} (Few-shot)       & 53.5 & 57.1 \\
\texttt{KARE} (Fine-tuned)     & 64.6 & 73.7 \\
\midrule
\texttt{HoT-SSM} (ours) & 59.1 & 60.4 \\
\bottomrule
\end{tabular}
\end{table}

\subsection{Results on Readmission task} \label{sec:readmission_task}
Table~\ref{tab:readmission_results} shows the performance comparison between \texttt{Hot-SSM} and other baselines on readmission task. It can be observed that proposed model achieves competitive performance against state-of-the-art baselines. We also emphasize that \texttt{HoT-SSM} attains competitive performance on this particular task with a minimal parameter complexity as shown Fig.~\ref{fig:pred_size}.

\begin{table*}[htbp]
\centering
\caption{Performance comparison on readmission prediction task}
\label{tab:readmission_results}
\begin{tabular}{lcccc}
\toprule
 & \multicolumn{4}{c}{\textbf{Task: Readmission Prediction}} \\
\cmidrule(lr){2-5}
\textbf{Model} 
& \multicolumn{2}{c}{\textbf{MIMIC-III}} 
& \multicolumn{2}{c}{\textbf{MIMIC-IV}} \\
\cmidrule(lr){2-3} \cmidrule(lr){4-5}
 & AUPRC & AUROC & AUPRC & AUROC \\
\midrule
GRU         & 68.2$_{(0.4)}$ & 65.4$_{(0.8)}$ & 66.1$_{(0.7)}$ & 66.2$_{(0.1)}$ \\
Transformer & 67.3$_{(0.7)}$ & 63.9$_{(1.1)}$ & 65.7$_{(0.3)}$ & 65.3$_{(0.4)}$ \\
RETAIN      & 65.1$_{(1.0)}$ & 64.1$_{(0.7)}$ & 66.2$_{(0.3)}$ & 65.3$_{(0.2)}$ \\
GRAM        & 67.2$_{(0.8)}$ & 64.3$_{(0.4)}$ & 66.1$_{(0.2)}$ & 66.3$_{(0.3)}$ \\
Deepr       & 68.8$_{(0.9)}$ & 66.5$_{(0.4)}$ & 65.6$_{(0.1)}$ & 65.4$_{(0.2)}$ \\
AdaCare     & 68.6$_{(0.6)}$ & 65.7$_{(0.3)}$ & 65.9$_{(0.0)}$ & 66.1$_{(0.0)}$ \\
GRASP       & 69.2$_{(0.4)}$ & 66.3$_{(0.6)}$ & 66.3$_{(0.3)}$ & 66.1$_{(0.2)}$ \\
StageNet    & 69.3$_{(0.6)}$ & 66.7$_{(0.4)}$ & 66.1$_{(0.1)}$ & 66.2$_{(0.1)}$ \\
GraphCARE   & \textbf{73.4}$_{(0.4)}$ & \textbf{69.7}$_{(0.5)}$ & \textbf{69.6}$_{(0.3)}$ & \textbf{68.5}$_{(0.4)}$ \\
\midrule
 {\texttt{HoT-SSM (v1)}} & 67.4$_{(0.9)}$ & 64.5$_{(0.6)}$ & 66.3$_{(0.2)}$ & 66.2$_{(0.4)}$ \\
{\texttt{HoT-SSM (v2)}} & 66.1$_{(1.9)}$ & 63.2$_{(1.8)}$ & 66.1$_{(1.9)}$ & 63.2$_{(1.8)}$ \\
\bottomrule
\end{tabular}
\end{table*}
\subsection{Results on Phenotyping task}
In Table~\ref{tab:phenotyping task}, we report the results on the phenotyping task to enable a fair comparison with prior methods, namely \texttt{HypEHR} \citep{hyperehr} and \texttt{HypKG}\citep{hypkg}, which are commonly evaluated on this benchmark. The results show that the proposed model consistently outperforms existing approaches, highlighting the importance of jointly modeling higher-order relationships. More importantly, it underscores the effectiveness of proposed method for generating  hyperknowledge graphs and capturing long-range dependencies for improved performance.
\begin{table}[htbp]
\caption{Performance comparison on phenotyping task.}
\centering
\begin{tabular}{lcc}
\toprule
\textbf{Method} & \textbf{AUROC} & \textbf{F1 Score} \\
\midrule
HypEHR  & $82.19 \pm 0.13$ & $41.51 \pm 0.52$ \\
HypKG   & $84.26 \pm 0.17$ & $45.30 \pm 0.49$ \\
\rowcolor{red!10} HoT-SSM & $85.95 \pm 0.01$ & $47.87 \pm 0.01$ \\
\bottomrule
\end{tabular}
\label{tab:phenotyping task}
\end{table}

\section{Reasoning Path Examples}
\label{appendix: reasoning_path_examples}
 
 In this section, we present additional examples of reasoning paths obtained using both attention-based and gradient-based approaches. The attention-based method extracts reasoning paths from the learned attention coefficients $\beta$ and $\alpha$ where higher values of $\beta$ indicate more influential hyperedges, higher $\alpha$ indicate influential entities and relevant to the model’s prediction. As illustrated in Fig.~\ref{fig:reasoning_path_A}, attention-based explanations capture richer clinical context by reflecting the model’s message-passing decisions and preserving temporal and structural coherence as each hyperedge directly encodes cooccuring concepts (condition, procedures, drugs) thereby giving more context to LLM.  In contrast, while gradient-based approaches are parameter-efficient, they primarily measure sensitivity rather than causal contribution, often resulting in less coherent reasoning paths.
 
 The prompt used for generating the explanation is given in figure~\ref{fig:prompt_template}.

\begin{figure}[t]
    \centering
    \begin{tcolorbox}
        \small
        Based on the following reasoning paths generated for patient mortality prediction, provide a detailed textual explanation of what happened:
        
        \vspace{0.5em}
        \textbf{Reasoning Paths Data:} \\
        \texttt{\{formatted\_output\}}
        
        \vspace{0.5em}
        Please provide a clear, structured explanation of the reasoning paths, including:
        \begin{enumerate}
            \itemsep0em 
            \item What the paths represent
            \item Key entities and relationships identified
            \item How they relate to mortality prediction
            \item Any notable patterns or insights
            \item Give concise text explanation
        \end{enumerate}
    \end{tcolorbox}
    \caption{The prompt template used to generate textual explanations from the reasoning paths. The \texttt{\{formatted\_output\}} placeholder is replaced by the specific paths extracted from the TKG.}
    \label{fig:prompt_template}
\end{figure}

\section{Limitations}\label{sec:limitations}
The proposed framework depends on large language models (LLMs) for constructing hyperknowledge graphs, which can be susceptible to hallucinations and inconsistencies. Although the model demonstrates strong performance even with static hypergraph construction methods, leveraging LLMs provides additional accuracy gains. However, this benefit comes at the cost of increased token usage and associated computational overhead.

\section{Future Directions}
\label{appendix: future_directions}
While this work focuses on healthcare applications, the proposed framework is general and can be extended to domains that requires long-context reasoning over structured data, such as question answering and retrieval-augmented generation, where exploiting higher-order relational structure with SSMs is a promising direction.

\begin{figure}[H]
    \centering
    {
    \begin{tcolorbox}
    \textbf{[I]} \\
    Mortality label=1, Predicted label=1 \\
\textbf{Visit 1:} \\
$\hookrightarrow$ \textcolor{blue}{\textbf{Gastrointestinal Disease}:} \textit{Biliary Tract Disease, Drugs for Peptic Ulcer and GORD} \\
$\hookrightarrow$ \textcolor{blue}{\textbf{Genitourinary Disease}:} \textit{Urinary Tract Infections, Other Diseases of Bladder and Urethra} \\

\vspace{0.1cm}
\textbf{Visit 2:} \\
$\hookrightarrow$ \textcolor{blue}{\textbf{Gastrointestinal Disease}:} \textit{Drugs for Peptic Ulcer and GORD, Drugs for Constipation} \\
$\hookrightarrow$ \textcolor{blue}{\textbf{Cancer}:} \textit{Cancer of Uterus, Other Non-Epithelial Cancer of Skin} \\

\vspace{0.1cm}
\textbf{Visit 3:} \\
$\hookrightarrow$ \textcolor{blue}{\textbf{Cancer}:} \textit{Cancer of Uterus} \\
$\hookrightarrow$ \textcolor{blue}{\textbf{Other and Ill-Defined Heart Disease}:} \textit{Peri-, Endo-, and Myocarditis Cardiomyopathy} \\

\vspace{0.1cm}
\textbf{Visit 4:} \\
$\hookrightarrow$ \textcolor{blue}{\textbf{Infectious Disease}:} \textit{Immunostimulants, Mycoses} \\
$\hookrightarrow$ \textcolor{blue}{\textbf{Cancer}:} \textit{Immunostimulants, Cancer of Uterus} \\

\vspace{0.2cm}
\textbf{LLM explanation}: The patient’s trajectory demonstrates a highly complex clinical course dominated by uterine cancer, compounded by recurrent gastrointestinal and genitourinary issues. The subsequent development of multiple infections and progressive vascular/hypertensive disorders, requiring continuous antibacterial and cardiovascular interventions, indicates a chronic and compounding decline in systemic health, leading to a high mortality risk.
    \tcblower
    \textbf{[II]} \\
     Mortality label=1, Predicted label=1 \\
    \textbf{Visit 1:} \\
    $\hookrightarrow$ \textcolor{blue}{\textbf{Entities:}} \textit{Anxiolytics and Biliary Tract Disease} 
    
    \vspace{0.1cm}
    \hspace{0.5cm}$\downarrow$ 
    \vspace{0.1cm}
    
    \textbf{Visit 2:} \\
    $\hookrightarrow$ \textcolor{blue}{\textbf{Entities:}} \textit{Cancer Of Uterus and Other Antibacterials, } 
    
    \vspace{0.1cm}
    \hspace{0.5cm}$\downarrow$ 
    \vspace{0.1cm}
    
    \textbf{Visit 3:} \\
    $\hookrightarrow$ \textcolor{blue}{\textbf{Entities:}} \textit{Cancer Of Uterus and Peri-, Endo-, And Myocarditis Cardiomyopathy}
     \vspace{0.1cm}

     \vspace{0.1cm}
    \hspace{0.5cm}$\downarrow$ 
    \vspace{0.1cm}

     \textbf{Visit 4:} \\
    $\hookrightarrow$ \textcolor{blue}{\textbf{Entities:}} \textit{Immunostimulants and Complication Of Device Implant Or Graft}
     \vspace{0.1cm}

     \textbf{LLM explanation}: The patient’s trajectory begins with biliary tract disease and the use of anxiolytics, rapidly escalating with a major diagnosis of uterine cancer. The clinical course is further complicated by infections requiring antibacterials and the development of severe cardiovascular issues, specifically peri-, endo-, and myocarditis cardiomyopathy. By the final visit, the administration of immunostimulants and the emergence of complications from a device implant or graft highlight a highly complex, progressive systemic decline driven by both the primary malignancy and compounding therapeutic burdens.
    
    \end{tcolorbox}
    }
    \caption{For a mortality prediction of patient  in MIMIC-III, [I] important hyperedges(blue) and corresponding top entities identified using attention based reasoning method, [II] important entities and reasoning path identified by gradient based method}
    \label{fig:reasoning_path_A}
\end{figure}

\end{document}